\newtheorem{assumption}{Assumption}
\newtheorem{lemma}{Lemma}
\newtheorem{corollary}{Corollary}
\newtheorem{theorem}{Theorem}
\newcommand{\eps}{\bm{\epsilon}}
\newcommand{\x}{\mathbf{x}}
\newcommand{\y}{\mathbf{y}}
\newcommand{\g}{\mathbf{g}}
\newcommand{\G}{\mathbf{G}}
\newcommand{\td}{\text{d}}
\newcommand{\sgn}{\text{sgn}}
\title{Implicit Regularization of Sharpness-Aware Minimization for Scale-Invariant Problems}
\author{
  Bingcong Li
  \And
  Liang Zhang 
  \And
  Niao He
 \AND \vspace{-0.3cm} \\
 Department of Computer Science \\
 ETH Zurich, Switzerland \\
 \texttt{\{bingcong.li, liang.zhang, niao.he\}@inf.ethz.ch}
}
\begin{document}

\maketitle

\begin{abstract}

Sharpness-aware minimization (SAM) improves generalization of various deep learning tasks. Motivated by popular architectures such as LoRA, we explore the implicit regularization of SAM for scale-invariant problems involving two groups of variables. Instead of focusing on commonly used sharpness, this work introduces a concept termed \textit{balancedness}, defined as the difference between the squared norm of two variables. This allows us to depict richer global behaviors of SAM.
In particular, our theoretical and empirical findings reveal that i) SAM promotes balancedness; and ii) the regularization on balancedness is \textit{data-responsive} -- outliers have stronger impact. 
The latter coincides with empirical observations that SAM outperforms SGD in the presence of outliers. 
Leveraging the implicit regularization, we develop a resource-efficient SAM variant, balancedness-aware regularization (BAR), tailored for scale-invariant problems such as finetuning language models with LoRA. BAR saves $95\%$ computational overhead of SAM, with enhanced test performance across various tasks on RoBERTa, GPT2, and OPT-1.3B.
\end{abstract}

\section{Introduction}\label{sec.intro}

Sharpness-aware minimization (SAM) is emerging as an appealing optimizer, because it enhances generalization performance on various downstream tasks across vision and language applications \citep{foret2021, sam4vit, bahri2021}. The success of SAM is typically explained using its implicit regularization (IR) toward a flat solution \citep{wen2023}.

However, existing results only characterize sharpness/flatness near \textit{local} minima \citep{wen2023}. Little is known about early convergence, despite its crucial role in SAM's implicit regularization \citep{agarwala2023}. 
In addition, theoretical understanding of SAM highly hinges upon the existence of positive eigenvalues of Hessians \citep{wen2023}, leaving gaps in nonconvex scenarios where the Hessian can be negative definite. 
The limitations above lead to our first question (\textbf{Q1}): \textit{can we broaden the scope of implicit regularization to depict global behaviors in SAM?}

Moreover, scenarios where SAM popularizes often involve certain form of data anomalies, such as outliers and large data variance. SAM has provable generalization benefits on sparse coding problems in the small signal-to-noise ratio (SNR) regime \citep{chen2024why-sam-over-sgd}. Remarkable performance of SAM is also observed under distributional shifts, e.g., domain adaptation \citep{wang2023}, meta-learning \citep{abbas2022sharp}, and transfer learning in language models \citep{bahri2021,sherborne2023}. Evidences above motivate our second question (\textbf{Q2}): \textit{can implicit regularization of SAM reflect its enhanced performance under data anomalies?}

This work answers both Q1 and Q2 within a class of \textit{scale-invariant} problems. 
The focus on scale-invariance is motivated by its prominence in deep learning architectures.
Consider variables $\x \in \mathbb{R}^{d_1}$ and $\y \in \mathbb{R}^{d_2}$, both in high-dimensional space. The problems of interest can be categorized into non-overparametrization (NOP) and overparametrization (OP), based on whether the dimension of variables ($d_1 + d_2$) is greater than dimension of $ \text{dom}~ f$,

\vspace{-0.4cm}
\begin{subequations}\label{eq.prob}
 	\begin{align}\label{eq.prob-nop}
 		\textbf{NOP:}~~~\min_{\x, \y}f_n(\x \y^\top) = \mathbb{E}_{\xi \sim {\cal D}}\big[ f_n^\xi(\x \y^\top) \big],
	\end{align}
\begin{align}\label{eq.prob-op}
 		\textbf{OP:}~~~\min_{\x, \y}f_o(\x^\top \y)=\mathbb{E}_{\xi \sim  {\cal D}}\big[ f_o^\xi(\x^\top \y) \big].
	\end{align}
\end{subequations}

Here, $d_1=d_2$ is assumed for OP, and ${\cal D}$ denotes the training data. For both cases, the losses are nonconvex in $(\x, \y)$.
Scale-invariance refers to that $(\alpha\x, \y/\alpha)$ share the same objective value $\forall \alpha \neq 0$. It naturally calls for implicit regularization from optimization algorithms to determine the value of $\alpha$. 
We focus on two-variable problems in the main text for simplicity and generalize the results to multi-layer cases in the appendix. Problems \eqref{eq.prob-nop} and \eqref{eq.prob-op} are inspired by widely-adopted modules in deep learning, where low rank adapters (LoRA) for finetuning language models is NOP, and softmax in attention falls in OP framework \citep{hu2021lora,vaswani2017attention}.

\begin{figure*}[t]
	\centering
	\begin{tabular}{cc}
		\hspace{-0.3cm}
		\includegraphics[width=.49\textwidth]{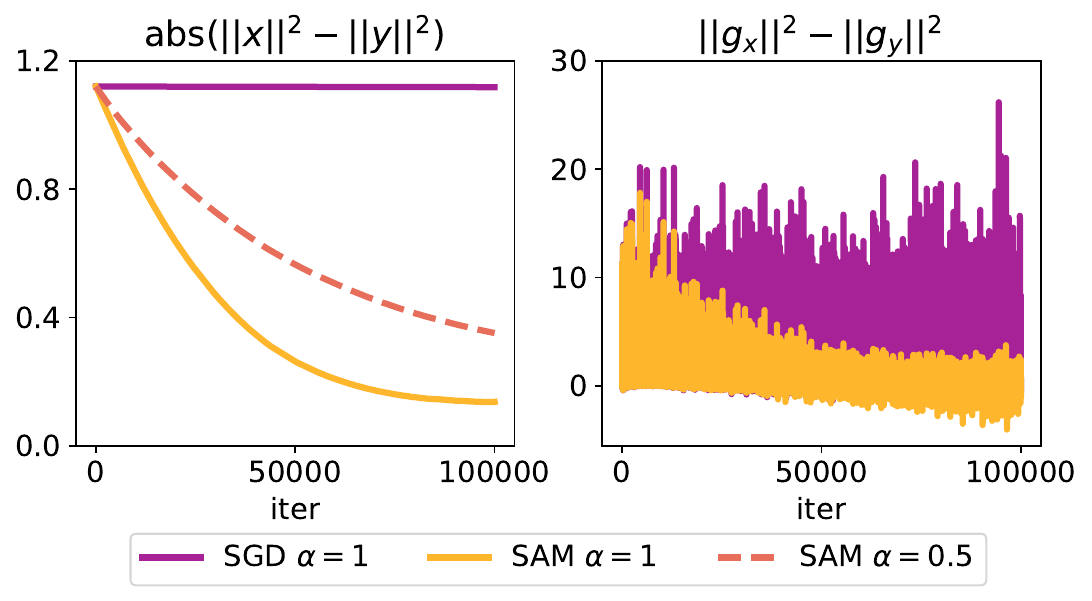}&
		\hspace{-0.3cm}
		\includegraphics[width=.49\textwidth]{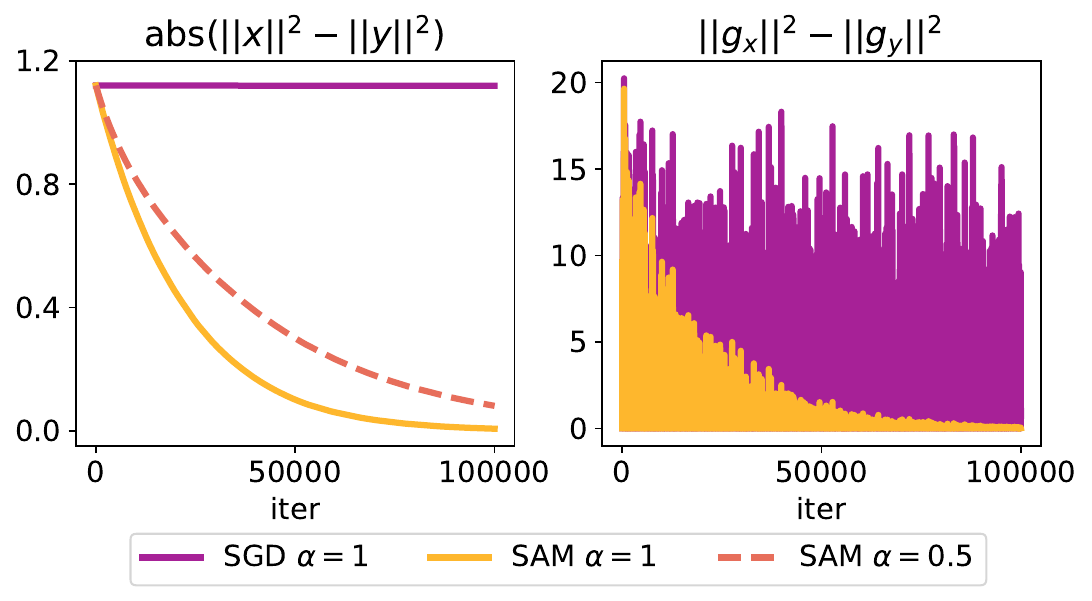}
		\\ 
	 \hspace{-0.2cm}  \small{(a) non-overparametrized (NOP)} & \small{(b) overparametrized (OP)}
	 \end{tabular}
	\caption{Implicit regularization of SAM on balancedness. The losses for NOP and OP are $\mathbb{E}[ \| \x\y^\top - (\mathbf{A} + \alpha \mathbf{N}) \|^2]$ and $\mathbb{E}[ \| \x^\top \y- (a + \alpha n) \|^2]$, respectively. Here, $\mathbf{A}$ is the ground truth matrix, $\mathbf{N}$ is the Gaussian noise, and $\alpha$ controls the SNR. Left of (a) and (b): $| \| \x_t \|^2 - \| \y_t \|^2 |$ vs. iteration. Right of (a) and (b): $| \| \g_{\x_t} \|^2 - \| \g_{\y_t} \|^2 |$ vs. iteration, where $(\g_{\x_t}, \g_{\y_t})$ denotes stochastic gradients. }
	\vspace{-0.2cm}
	 \label{fig.example}
\end{figure*}

This work studies SAM's implicit regularization on \textit{balancedness}, defined as ${\cal B}_t = \frac{1}{2}\big( \| \x_t \|^2 - \| \y_t \|^2 \big)$. 
Balancedness is a useful alternative to sharpness for \eqref{eq.prob} because:
i) it enables us to go beyond local minima and describe the behavior over SAM's entire trajectory; ii) analyses and assumptions can be significantly simplified when working with ${\cal B}_t$; and, iii) it enables a data-driven perspective for understanding SAM. Building on balancedness, we answer our major questions.

For Q1, we prove that even with imbalanced initialization, SAM drives $|{\cal B}_t| \rightarrow 0$ for OP, while ensuring a small $|{\cal B}_t|$ in NOP. In contrast, we also prove that balancedness of SGD is unchanged over iterations. This clear distinction between SAM and SGD is illustrated in Fig. \ref{fig.example}. 
Thanks to the adoption of balancedness, our results on implicit regularization have no requirement on the batchsize compared to \citep{wen2023} and can be extended to explain $m$-sharpness in \citep{foret2021}.

Regarding Q2, we present analytical and empirical evidences that data anomalies (e.g., samples with large noise) have stronger impact on balancedness for both NOP and OP. Fig. \ref{fig.example} showcases an example where SAM is applied on the same problem with different SNRs. Smaller SNR (i.e., larger $\alpha$) promotes balancedness faster. 
Being more balanced with noisy data also aligns well with previous studies \citep{chen2024why-sam-over-sgd, wang2023}, which show that SAM performs better than SGD under data anomalies. This data-driven behavior of SAM is well depicted through balancedness. 

Our theoretical understanding on balancedness also cultivates practical tools.
In particular, we explicify the implicit regularization of SAM as a \textit{data-driven} regularizer. When applied on top of, e.g., SGD, it enables a computationally efficient variant of SAM, balancedness-aware regularization (BAR), suited for scale-invariant problems such as finetuning language models with LoRA \citep{hu2021lora}. BAR eliminates the need to compute the second gradient in SAM,
thereby significantly reducing overhead in large-scale settings. BAR improves the test performance of LoRA on three representative downstream tasks on RoBERTa, GPT2, and OPT, while saving $95\%$ computational overhead of SAM. Moreover, this is the \textit{first} efficient SAM approach derived from SAM's implicit regularization. In a nutshell, our contribution can be summarized as:

\begin{enumerate}[leftmargin=0.5cm]
    \item[\ding{118}] \textbf{Theories.} Balancedness is introduced as a new metric for implicit regularization in SAM. Compared to sharpness, balancedness enables us to depict richer behaviors -- SAM favors balanced solutions for both NOP and OP, and data anomalies have stronger regularization on balancedness.

    \item[\ding{118}] \textbf{Practice.} Implicit regularization of SAM is made explicit for practical merits. The resulting approach, balancedness-aware regularization (BAR), improves accuracy for finetuning language models with LoRA, while significantly saving computational overhead of SAM.

\end{enumerate}

\textbf{Notation}. Bold lowercase (capital) letters denote column vectors (matrices); $\| \cdot \|$ stands for $\ell_2$ (Frobenius) norm of a vector (matrix), and $(\cdot)^\top$ refers to transpose.

\subsection{Related Work}

Related topics are streamlined here, with comprehensive discussions deferred to Apdx. \ref{apdx.sec.related-works}.

\textbf{Scale-invariance in deep learning.} Scale-invariant modules are prevalent in modern neural networks, such as LoRA, ReLU networks, and softmax in attention. However, scale-invariant problems are not yet fully understood, especially from a theoretical perspective. \citet{neyshabur2017pac} develop scale-invariant PAC-Bayesian bounds for ReLU networks. A scale-invariant SGD is developed in \citep{neyshabur2015}, and this approach becomes more practical recently in \citep{gonon2023path}. Linear neural networks entail scale-invariance and overparametrization simultaneously, and IR of (S)GD on quadratic loss is established in \citep{arora2018, du2018,gidel2019implicit}. IR of GD for softmax attention in transformers is studied in \citep{sheen2024} assuming linearly separable data. It is pointed out in \citep{dinh2017} that sharpness is sensitive to scaling, while our results indicate that when taking the training trajectory into account, SAM excludes extreme scaling.

\textbf{Mechanism behind SAM.} To theoretically explain the success of SAM, \citet{bartlett2022dynamics} analyze sharpness on quadratic losses. \citet{wen2023} focus on sharpness of SAM near the solution manifold on smooth loss functions, requiring batchsize to be 1 in the stochastic case. \citet{maksym2022} consider sparsity of SAM on (overparametrized) diagonal linear networks on a regression problem.
\citet{chen2024why-sam-over-sgd} study the benign overfitting of SAM on a two-layer ReLU network. In general, existing studies on SAM's implicit regularization focus more on sharpness and do not fully capture scale-invariance. In comparison, our results i) are Hessian-free and hence sharpness-free; ii) have no constraint on batchsize; and iii) hold for both NOP and OP.

\textbf{SAM variants.} Approaches in \citep{kim2022,kwon2021} modify SAM for efficiency under coordinate-wise ill-scaling, while our results suggest that SAM favors balancedness between layers. Computationally efficient SAM variants are developed through reusing or sparsifying gradients \citep{liu2022,mi2022}; stochastic perturbation \citep{du2022}; switching to SGD \citep{jiang2023}; and connecting with distillation \citep{du2022saf}. Our BAR can be viewed as resource-efficient SAM applied specifically for scale-invariant problems such as LoRA. Different from existing works, BAR is the first to take inspiration from the implicit regularization of SAM.

\section{Preliminaries}

This section briefly reviews SAM and then compares sharpness with balancedness. For a smoother presentation, our main numerical benchmark, LoRA \citep{hu2021lora}, is revisited in Sec. \ref{sec.explicify}.

\subsection{Recap of SAM}

\begin{wrapfigure}{t}{0.53\textwidth}
\begin{minipage}[t]{0.53\textwidth}
\vspace{-0.8cm}
\begin{algorithm}[H]
    \caption{SAM \citep{foret2021}} \label{alg.sam}
    \begin{algorithmic}[1]
    	\State \textbf{Initialize:} $\mathbf{w}_0, \rho, T, \eta$
    	\For {$t=0,\dots,T-1$}
    		\State Sample $\xi$ to get a minibatch ${\cal M}_t$
                \State Define stochastic gradient on ${\cal M}_t$ as $\nabla h_t( \cdot)$
    		\State Find $\bm{\epsilon}_t = \rho \nabla h_t(\mathbf{w}_t) / \| \nabla h_t(\mathbf{w}_t) \|$ 
			\State Update via $\mathbf{w}_{t+1} = \mathbf{w}_t - \eta \nabla h_t (\mathbf{w}_t + \bm{\epsilon}_t)$ 
		\EndFor
	\end{algorithmic}
\end{algorithm}
\end{minipage}
\vspace{-0.6cm}
\end{wrapfigure}

Sharpness-aware minimization (SAM) is designed originally to seek for solutions in flat basins. 
The idea is formalized by enforcing small loss around the entire neighborhood in parameter space, i.e., $\min_\mathbf{w} \max_{\| \eps \| \leq \rho}  h (\mathbf{w} + \eps)$, where $\rho$ is the radius of considered neighborhood, and $h(\mathbf{w}):= \mathbb{E}_\xi [h^{\xi}(\mathbf{w}) ]$. Practical implementation of SAM is summarized under Alg. \ref{alg.sam}. 
It is proved in \citep{wen2023} that $\|\nabla h_t(\mathbf{w})\|  \neq  0$ (in line 5) holds for any $\rho$ under most initialization. Based on this result and similar to \citep{dai2024crucial}, we assume that SAM iterates are well-defined.

\textbf{Limitation of sharpness.}
Coming naturally with SAM is the so-termed sharpness, given by ${\cal S}(\mathbf{w}):=\max_{\| \eps \|\leq \rho} h(\mathbf{w} +\eps)- h(\mathbf{w}) $. When $\| \nabla h(\mathbf{w}) \| \rightarrow 0$, ${\cal S}(\mathbf{w})$ can be approximated using (scaled)
largest eigenvalue of Hessian \citep{zhuang2022}. This approximation is widely exploited in
literature to study the implicit regularization of SAM. Consequently, most results only hold \textit{locally} 
-- behaviors near $\| \nabla h(\mathbf{w}) \| \rightarrow 0$ are studied. In addition, sharpness (the largest eigenvalue) is not always informative for scale-invariant problems \eqref{eq.prob}. Consider $h(x,y) = xy$ for example. The sharpness is $1$ for any $(x, y)$ -- these points are not distinguishable in terms of sharpness.

\subsection{Prelude on Balancedness}\label{sec.hard-examples}

Balancedness ${\cal B}_t:= \frac{1}{2} \big( \| \x_t \|^2 - \| \y _t\|^2 \big)$ turns out to be an intriguing alternative to sharpness on the scale-invariant problem \eqref{eq.prob}. Being a global metric,
balancedness is capable of describing the entire trajectory of an algorithm, regardless of proximity to critical points or definiteness of Hessian.

How does ${\cal B}_t$ evolve in different algorithms? 
To set a comparing benchmark of SAM, we first borrow results from previous works on SGD. Following implicit regularization literature such as \citep{arora2018,arora2019implicit,wen2023}, we consider SGD with infinitesimally small learning rate $\eta \rightarrow 0$ for the NOP problem \eqref{eq.prob-nop}
\begin{align}\label{eq.sgd}
	\x_{t+1} = \x_t - \eta \g_{\x_t}, ~~~~ \y_{t+1} = \y_t - \eta \g_{\y_t}.
\end{align}

\begin{theorem}[\citep{arora2018,arora2018convergence,ji2018gradient,ahn2024learning}]\label{thm.nop-sgd}
    When applying SGD on the NOP \eqref{eq.prob-nop}, the limiting flow with $\eta \rightarrow 0$ satisfies $\|\mathbf{x}_t\|^2 - \|\mathbf{y}_t\|^2 = \|\mathbf{x}_0\|^2 - \|\mathbf{y}_0\|^2$ for all $t > 0$. In other words, $\frac{\td {\cal B}_t}{\td t} = 0$ holds.
\end{theorem}

Theorem \ref{thm.nop-sgd} shows that ${\cal B}_t \equiv {\cal B}_0$ given $\eta \rightarrow 0$. A graphical illustration can be found in Fig. \ref{fig.example} (a). 
Another interesting observation is that given the same initialization, ${\cal B}_t$ is fixed for SGD regardless of training datasets. This suggests that SGD is less adaptive to data. A similar result of Theorem \ref{thm.nop-sgd} can be established for SGD on OP. The full statement is deferred to Apdx. \ref{apdx.sec.sgd-op}; see also Fig. \ref{fig.example} (b).

\textbf{Merits of being balance.}
Because ${\cal B}_0$ is preserved, SGD is sensitive to initialization. For example, $(\x_0, \y_0)$ and $(2\x_0, 0.5\y_0)$ can result in extremely different trajectories, although the same objective value is shared at initialization. Most of existing works initialize ${\cal B}_0 \approx 0$ to promote optimization benefits, because the variance of stochastic gradient is small and the local curvature is harmonized around a balanced solution. Take the stochastic gradient of NOP on minibatch ${\cal M}$ for example
    \begin{align}\label{eq.sto-grad-nop}
        \g_\x = \frac{1}{|{\cal M}|} \Big[   \sum_{\xi \in {\cal M}} \nabla f_n^\xi (\x\y^\top )   \Big] \y,	 ~~~~~ \g_\y =  \frac{1}{|{\cal M}|} \Big[    \sum_{\xi \in {\cal M}} \nabla f_n^\xi (\x\y^\top )    \Big]^\top \x.
    \end{align}
Assuming bounded variance $\mathbb{E}[\| \frac{1}{|{\cal M}|}  \sum_{\xi \in {\cal M}} \nabla f_n^\xi (\x\y^\top ) -  \nabla f_n (\x\y^\top )   \|^2 ] \leq \sigma^2$, it can be seen that the variance of $[\g_\x, \g_\y]$ is bounded by $\sigma^2 (\| \x \|^2 + \| \y \|^2)$. In other words, among  $\{(\x, \y) | \x \y^\top = \mathbf{W}\}$, gradient variance is minimized if $\| \x \| = \|  \y\|$, i.e., being balance. Moreover, block smoothness parameters $L_n^\x$ and $L_n^\y$\footnote{Definition of $L_n^\x$: for a fixed $\y$, $\|\g_{\x_1} - \g_{\x_2} \| \leq L_n^\x \| \x_1 - \x_2\|$.} also hint upon the difficulties for optimization, where large values typically correspond to slow convergence \citep{bottou2018,nesterov2004}. With the help of Assumption \ref{as.smooth-nop} (in the next subsection), it can be seen that $L_n^\x = L_n \| \y \|^2$ and $L_n^\y = L_n \| \x \|^2$. In other words, a large $|{\cal B}_t|$ implies difficulty for optimizing one variable than the other.
For these reasons, balancedness is well-appreciated in domains such as matrix factorization/sensing -- a special case of \eqref{eq.prob-nop}
\citep{tu2016low, bartlett2018id,du2018,ge2017no}. It is also observed that balanced neural networks are easier to optimize relative to unbalanced ones \citep{neyshabur2015}.

\subsection{Assumptions and Prerequisites}

To gain theoretical insights of scale-invariant problems in \eqref{eq.prob}, we assume that the loss has Lipschitz continuous gradient on $\text{dom}~f$ following common nonconvex optimization and SAM analyses \citep{bottou2018, maksym2022, wen2023}. 

\begin{assumption}\label{as.smooth-nop}
	Let $\mathbf{W} \in \mathbb{R}^{d_1 \times d_2}$, and $w \in \mathbb{R}$. For each $\xi$, $f_n^\xi(\mathbf{W})$ and $f_o^\xi(w)$ in \eqref{eq.prob} have $L_n$, and $L_o$ Lipschitz continuous gradient, respectively.
\end{assumption}

Scale-invariant problems are challenging to solve even on simple problems in Fig. \ref{fig.example}. Even GD can diverge on some manually crafted initialization \citep{de2015,arora2018convergence}. With proper hyperparameters this rarely happens in practice; hence, we focus on scenarios where SGD and SAM do not diverge. This assumption is weaker than the global convergence needed in \citep{maksym2022}, and is similar to the assumption on existence \citep{wen2023}.

\section{SAM for Non-Overparametrized Problems}\label{sec.nop}

This section tackles the implicit regularization of SAM on NOP \eqref{eq.prob-nop}. Motivated by practical scenarios such as LoRA, we focus on cases initialized with large $|{\cal B}_0|$.

When ambiguity is absent, the subscript in $f_n$ and $L_n$ is ignored in this section for convenience.
Applying Alg. \ref{alg.sam} on NOP, the update of SAM can be written as
\begin{subequations}\label{eq.sam-nop}
\begin{align}
	\tilde{\x}_t = \x_t +  \rho u_t \g_{\x_t}, &~~~~~\tilde{\y}_t = \y_t +  \rho u_t \g_{\y_t}	\\
	\g_{\tilde{\x}_t} =  \nabla f_t (\tilde{\x}_t\tilde{\y}_t^\top ) \tilde{\y}_t,	 &~~~~ \g_{\tilde{\y}_t} =  \big{[} \nabla f_t (\tilde{\x}_t\tilde{\y}_t^\top ) \big{]}^\top \tilde{\x}_t \\
	\x_{t+1} = \x_t - \eta \g_{\tilde{\x}_t}, &~~~~ \y_{t+1} = \y_t - \eta \g_{\tilde{\y}_t}
\end{align}
\end{subequations}
where $\rho >0$ is the radius of SAM perturbation; $u_t := 1 / \sqrt{\| \g_{\x_t}\|^2 + \|  \g_{\y_t}\|^2}$; and $f_t$, $\nabla f_t$ denote the loss, stochastic gradient on minibatch ${\cal M}_t$, respectively.

\begin{theorem}\label{thm.nop-sam}
    (Dynamics of SAM.) Suppose that Assumption \ref{as.smooth-nop} holds. Consider SAM for NOP in \eqref{eq.sam-nop} with a sufficiently small $\rho$. Let ${\cal B}_t := \frac{1}{2} \big( \| \x_t \|^2 - \| \y_t\|^2 \big)$. For some $|{\cal A}_t|= {\cal O}(\rho^2 L)$ and $\eta \rightarrow 0$, the limiting flow of SAM guarantees that
	\begin{align}\label{eq.sam-dynamic}
		 \frac{\td {\cal B}_t }{\td t} =  \rho  \frac{\| \g_{\x_t} \|^2 -  \| \g_{\y_t} \|^2}{\sqrt{\| \g_{\x_t}\|^2 + \|  \g_{\y_t}\|^2}}  + {\cal A}_t.
	\end{align}
	Moreover, the change on ${\cal B}_t$ depends on the difference of stochastic gradients on $\x_t$ and $\y_t$, i.e.,
	\begin{align}\label{eq.sam-nop-concentration}
		\rho \big| \| \g_{\x_t}\| - \|  \g_{\y_t}\| \big| - {\cal O}(\rho^2 L)	\leq  \big| \frac{d {\cal B}_t}{ d t}  \big| \leq \rho \sqrt{ \big| \| \g_{\x_t}\|^2 - \|  \g_{\y_t}\|^2 \big| } + {\cal O}(\rho^2 L).
	\end{align}
\end{theorem}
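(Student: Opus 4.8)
The plan is to track the one-step change of ${\cal B}_t$, pass to the limiting flow, and then isolate the leading term by exploiting the product structure of the gradients. First I would expand $\| \x_{t+1} \|^2 = \| \x_t \|^2 - 2\eta \langle \x_t, \g_{\tilde\x_t}\rangle + \eta^2 \| \g_{\tilde\x_t} \|^2$ (and the analogous identity for $\y$) from the update in \eqref{eq.sam-nop}. Forming ${\cal B}_{t+1} - {\cal B}_t$, dividing by $\eta$, and sending $\eta \to 0$ discards the $O(\eta)$ remainder and yields the limiting flow
\[
\frac{\td {\cal B}_t}{\td t} = -\langle \x_t, \g_{\tilde\x_t}\rangle + \langle \y_t, \g_{\tilde\y_t}\rangle .
\]

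Next I would insert $\g_{\tilde\x_t} = \nabla f_t(\tilde\x_t \tilde\y_t^\top)\tilde\y_t$ and $\g_{\tilde\y_t} = [\nabla f_t(\tilde\x_t \tilde\y_t^\top)]^\top \tilde\x_t$ together with $\tilde\x_t = \x_t + \rho u_t \g_{\x_t}$ and $\tilde\y_t = \y_t + \rho u_t \g_{\y_t}$. Writing $M_t := \nabla f_t(\tilde\x_t \tilde\y_t^\top)$, the two inner products combine into $\rho u_t\big( \g_{\x_t}^\top M_t \y_t - \x_t^\top M_t \g_{\y_t}\big)$, since the common $\x_t^\top M_t \y_t$ terms cancel. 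The key step is to swap the perturbed Jacobian $M_t$ for the unperturbed one $M_t^0 := \nabla f_t(\x_t \y_t^\top)$. Because $\g_{\x_t} = M_t^0 \y_t$ and $\g_{\y_t} = (M_t^0)^\top \x_t$ by \eqref{eq.sto-grad-nop}, the identities $\g_{\x_t}^\top M_t^0 \y_t = \| \g_{\x_t} \|^2$ and $\x_t^\top M_t^0 \g_{\y_t} = \| \g_{\y_t}\|^2$ produce exactly the leading term $\rho u_t\big( \| \g_{\x_t}\|^2 - \| \g_{\y_t}\|^2 \big)$, which matches the claimed expression once $u_t = 1/\sqrt{\| \g_{\x_t}\|^2 + \| \g_{\y_t}\|^2}$ is substituted.

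I expect the main obstacle to be controlling the leftover ${\cal A}_t = \rho u_t\big[ \g_{\x_t}^\top (M_t - M_t^0)\y_t - \x_t^\top (M_t - M_t^0)\g_{\y_t}\big]$. I would bound it via $L$-smoothness (Assumption \ref{as.smooth-nop}): since the perturbation scales as $\rho u_t \| \g_{\x_t}\| \le \rho$, we have $\tilde\x_t \tilde\y_t^\top - \x_t \y_t^\top = O(\rho)$ and hence $\| M_t - M_t^0 \| \le L \| \tilde\x_t \tilde\y_t^\top - \x_t \y_t^\top \| = O(\rho L)$. Combined with $\rho u_t \| \g_{\x_t}\| \le \rho$, $\rho u_t \| \g_{\y_t}\| \le \rho$, and the boundedness of the iterate norms $\| \x_t\|, \| \y_t\|$ (guaranteed by the non-divergence assumption), each term is $O(\rho^2 L)$, giving $|{\cal A}_t| = O(\rho^2 L)$. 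The delicate part is keeping the iterate-norm factors absorbed into the constant so the error does not inflate beyond the stated order.

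Finally, for the concentration bound \eqref{eq.sam-nop-concentration} I would apply the triangle inequality to \eqref{eq.sam-dynamic} and then use two elementary estimates with $a := \| \g_{\x_t}\|$ and $b := \| \g_{\y_t}\|$. Writing $|a^2 - b^2| = |a-b|(a+b)$ and noting $(a+b)^2 = a^2 + b^2 + 2ab \ge a^2 + b^2$ gives the lower bound $|a-b| \le |a^2-b^2|/\sqrt{a^2+b^2}$, while $|a^2 - b^2| \le a^2 + b^2$ gives the upper bound $|a^2-b^2|/\sqrt{a^2+b^2} \le \sqrt{|a^2-b^2|}$. Sandwiching the leading term between these and folding $|{\cal A}_t| = O(\rho^2 L)$ into the additive error yields both sides of \eqref{eq.sam-nop-concentration}.
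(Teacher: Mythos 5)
Your proposal is correct and follows essentially the same route as the paper's proof: derive the limiting flow, cancel the symmetric $\x_t^\top M_t \y_t$ terms, split off the leading term by replacing the perturbed gradient $M_t$ with $M_t^0$ so that $\g_{\x_t}^\top M_t^0 \y_t = \|\g_{\x_t}\|^2$ and $\x_t^\top M_t^0 \g_{\y_t} = \|\g_{\y_t}\|^2$, bound the residual ${\cal A}_t$ via $L$-smoothness of $f_t$ together with $\rho u_t\|\g_{\x_t}\| \le \rho$ and bounded iterates, and finish \eqref{eq.sam-nop-concentration} with the elementary estimates $(a+b) \ge \sqrt{a^2+b^2}$ and $a^2+b^2 \ge |a^2-b^2|$. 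In fact, your treatment of the concentration bounds is spelled out in more detail than in the paper, which leaves that step implicit.
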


Unlike SGD for which $\frac{\td {\cal B}_t }{\td t}=0$, Theorem \ref{thm.nop-sam} states that the balancedness for SAM is driven by gradient difference $\| \g_{\x_t} \|^2 - \| \g_{\y_t} \|^2$. To gain some intuition, if we \textit{estimate} $\| \g_{\x_t} \|^2 - \| \g_{\y_t} \|^2 \propto \|\y_t \|^2 - \|\x_t \|^2$ based on \eqref{eq.sto-grad-nop} and ignore ${\cal A}_t$, it can be seen that $\frac{\td {\cal B}_t }{\td t} \propto -\rho {\cal B}_t$. This indicates the contraction on $|{\cal B}_t|$. A graphical illustration on decreasing $|{\cal B}_t|$, and its relation with gradient difference can be found in Figs. \ref{fig.example} (a) and \ref{fig.example-nop} (a). Moreover, this implicit regularization on balancedness is global as it holds for all $t$ regardless of whether $(\x_t, \y_t)$ is close to local optima. Thanks to adopting balancedness as the metric, Theorem \ref{thm.nop-sam} also poses no requirement on the batchsize.

\textbf{SAM promotes balancedness.} As discussed in Section \ref{sec.hard-examples}, unbalancedness is burdensome for optimization. SAM overcomes this by implicitly favoring relatively balanced solutions.

\begin{corollary}\label{thm.sam-nop-balancing}
	(Informal.) 
	Under some regularity conditions, there exists $\bar{\cal B}_t^\rho  \geq 0$ such that whenever $|{\cal B}_t|> \bar{\cal B}^\rho_t$, the magnitude of ${\cal B}_t$ shrinks, where $\bar{\cal B}^\rho_t$ can be found in \eqref{apdx.eq.bbar} at appendix.
\end{corollary}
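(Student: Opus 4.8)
The plan is to derive Corollary~\ref{thm.sam-nop-balancing} directly from the dynamics \eqref{eq.sam-dynamic} of Theorem~\ref{thm.nop-sam}. Shrinkage of the magnitude of balancedness is equivalent to $\frac{\td {\cal B}_t^2}{\td t} = 2 {\cal B}_t \frac{\td {\cal B}_t}{\td t} < 0$, so it suffices to show that the leading term of $\frac{\td {\cal B}_t}{\td t}$ carries the sign opposite to ${\cal B}_t$ and dominates the residual ${\cal A}_t$. Since the denominator $\sqrt{\| \g_{\x_t}\|^2 + \| \g_{\y_t}\|^2}$ is positive, the sign of the leading term is governed entirely by $\| \g_{\x_t}\|^2 - \| \g_{\y_t}\|^2$. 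Hence the crux is to relate the sign of this gradient-norm difference to the sign of ${\cal B}_t = \frac{1}{2}(\| \x_t\|^2 - \| \y_t\|^2)$.

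First I would rewrite the stochastic gradients via \eqref{eq.sto-grad-nop}. Writing $\G_t := \frac{1}{|{\cal M}_t|} \sum_{\xi \in {\cal M}_t} \nabla f^\xi(\x_t \y_t^\top)$, we have $\g_{\x_t} = \G_t \y_t$ and $\g_{\y_t} = \G_t^\top \x_t$, hence $\| \g_{\x_t}\|^2 = \y_t^\top \G_t^\top \G_t \y_t$ and $\| \g_{\y_t}\|^2 = \x_t^\top \G_t \G_t^\top \x_t$. Both quadratic forms are controlled by the spectrum of $\G_t$: denoting $\lambda_{\max} := \| \G_t\|^2$ and letting $\gamma_t > 0$ be a lower bound on the Rayleigh quotient $\x_t^\top \G_t \G_t^\top \x_t / \| \x_t\|^2$, I get $\| \g_{\x_t}\|^2 \leq \lambda_{\max} \| \y_t\|^2$ and $\| \g_{\y_t}\|^2 \geq \gamma_t \| \x_t\|^2$. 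Consequently, in the regime ${\cal B}_t > 0$, the difference satisfies $\| \g_{\x_t}\|^2 - \| \g_{\y_t}\|^2 \leq \lambda_{\max} \| \y_t\|^2 - \gamma_t \| \x_t\|^2$, which is strictly negative precisely when the norm ratio $\| \x_t\|^2 / \| \y_t\|^2$ exceeds the restricted condition number $\kappa_t := \lambda_{\max}/\gamma_t$.

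Next I would translate this ratio condition into an explicit threshold on ${\cal B}_t$ and absorb the residual. Substituting $\| \x_t\|^2 = \| \y_t\|^2 + 2{\cal B}_t$, the inequality $\| \x_t\|^2 > \kappa_t \| \y_t\|^2$ becomes ${\cal B}_t > \frac{1}{2}(\kappa_t - 1)\| \y_t\|^2$, which already renders the leading term strictly negative with magnitude $\Theta(\rho \| \g_{\x_t}\|)$. Since $|{\cal A}_t| = {\cal O}(\rho^2 L)$, taking $\rho$ small makes the leading term dominate, and enlarging the threshold by a correction of order $\rho L$ that offsets ${\cal A}_t$ yields the explicit $\bar{\cal B}_t^\rho$ of \eqref{apdx.eq.bbar}, for which $|{\cal B}_t| > \bar{\cal B}_t^\rho$ forces $\frac{\td {\cal B}_t^2}{\td t} < 0$. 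The symmetric regime ${\cal B}_t < 0$ is handled by the mirrored bounds $\| \g_{\y_t}\|^2 \leq \lambda_{\max} \| \x_t\|^2$ and $\| \g_{\x_t}\|^2 \geq \gamma_t' \| \y_t\|^2$.

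The main obstacle is the lower bound of step two: controlling $\| \g_{\y_t}\|^2 = \x_t^\top \G_t \G_t^\top \x_t$ by a positive multiple of $\| \x_t\|^2$. This degenerates whenever $\x_t$ aligns with the near-null space of $\G_t^\top$ (for instance when $d_1$ exceeds the rank of $\G_t$, so $\gamma_t \to 0$ and $\kappa_t \to \infty$), which is exactly the pathology the regularity conditions must exclude. Making this subspace-alignment/conditioning hypothesis quantitative -- strong enough to produce a finite, clean $\bar{\cal B}_t^\rho$ yet honest about the NOP setting where $\G_t$ need not be full rank -- is the delicate part, and I expect the appendix to encode it as a uniform lower bound on $\gamma_t$ (equivalently an upper bound on $\kappa_t$) along the trajectory, so that $\bar{\cal B}_t^\rho$ scales with $(\kappa_t - 1)\| \y_t\|^2$ plus the $\rho L$ residual.
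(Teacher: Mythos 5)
Your proposal is correct in its core mechanism and, at that level, matches the paper: both reduce via \eqref{eq.sam-dynamic} to the sign of $\|\g_{\x_t}\|^2-\|\g_{\y_t}\|^2$ (the denominator being positive), convert that sign into a threshold on the ratio $\|\x_t\|^2/\|\y_t\|^2$, and absorb $|{\cal A}_t|={\cal O}(\rho^2 L)$ by inflating the threshold by a margin corresponding to $\epsilon={\cal O}(\rho L)$. Where you differ is the device that produces the threshold. You use worst-case spectral bounds, $\|\g_{\x_t}\|^2\le\lambda_{\max}\|\y_t\|^2$ and $\|\g_{\y_t}\|^2\ge\gamma_t\|\x_t\|^2$, yielding the condition number $\kappa_t=\lambda_{\max}/\gamma_t$. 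The paper instead rescales $(\x_t,\y_t)$ to the balanced pair $(\bar\x_t,\bar\y_t)$ on the same scale-invariance orbit ($\bar\x_t\bar\y_t^\top=\x_t\y_t^\top$, $\|\bar\x_t\|=\|\bar\y_t\|$), uses the crucial fact that $\G_t$ depends only on the product and is therefore unchanged by the rescaling, writes both gradient norms as exact functions of the scale $\alpha_t^2=\|\x_t\|/\|\y_t\|$, and invokes monotonicity of $h(z)=(az-b/z)/\sqrt{az+b/z}$ to locate the exact crossing $\bar\alpha^2=\|\G_t\bar\y_t\|/\|\G_t^\top\bar\x_t\|$; this gives \eqref{apdx.eq.bbar}, and inverting $h$ at $\epsilon={\cal O}(\rho L)$ handles the residual and defines $\bar{\cal B}_t^\rho$.

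The one real weakness is your closing guess about what the regularity conditions must be. Because the directions of $\x_t,\y_t$ and the matrix $\G_t$ are all invariant along the scaling orbit, the Rayleigh quotients $a:=\|\G_t^\top\x_t\|^2/\|\x_t\|^2$ and $b:=\|\G_t\y_t\|^2/\|\y_t\|^2$ are themselves scale-invariant; taking $\gamma_t=a$ and replacing $\lambda_{\max}$ by $b$ (both choices are exact identities, not just bounds, since $\|\g_{\y_t}\|^2=a\|\x_t\|^2$ and $\|\g_{\x_t}\|^2=b\|\y_t\|^2$) turns your condition $\|\x_t\|^2/\|\y_t\|^2>\kappa_t$ into exactly the paper's $\|\x_t\|^2/\|\y_t\|^2>b/a$, and your threshold then coincides with \eqref{apdx.eq.bbar}. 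Consequently, the only hypothesis needed is $a,b>0$, i.e., the paper's pointwise conditions $\|\g_{\x_t}\|>0$ and $\|\g_{\y_t}\|>0$ at the given $t$ (plus small $\rho$); no uniform lower bound on $\gamma_t$ along the trajectory and no restriction on the rank or conditioning of $\G_t$ are required, because $\bar{\cal B}_t^\rho$ is time-varying and simply becomes vacuously large at times when the current gradients degenerate. Your $\lambda_{\max}$-based threshold remains finite and valid under those same pointwise conditions, just strictly more conservative than the paper's exact one.
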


Corollary \ref{thm.sam-nop-balancing} shows that SAM promotes balancedness until $|{\cal B}_t|$ reaches lower bounds $\bar{\cal B}^\rho_t$. Because $\bar{\cal B}^\rho_t$ depends on SAM's trajectory, we plot $\frac{1}{T}\int_{0}^{T} \bar{\cal B}_t^\rho dt $ using dotted lines for better visualization in Fig. \ref{fig.example-nop} (a). It can be seen that our calculation on $\bar{\cal B}_t^\rho$ almost matches the balancedness of SAM after sufficient convergence. Being balance also reveals that the benefit of SAM can come from optimization, which is a perspective typically ignored in literature.

\begin{figure*}[t]
	\centering
	\begin{tabular}{cc}
		\hspace{-0.2cm}
		\includegraphics[width=.466\textwidth]{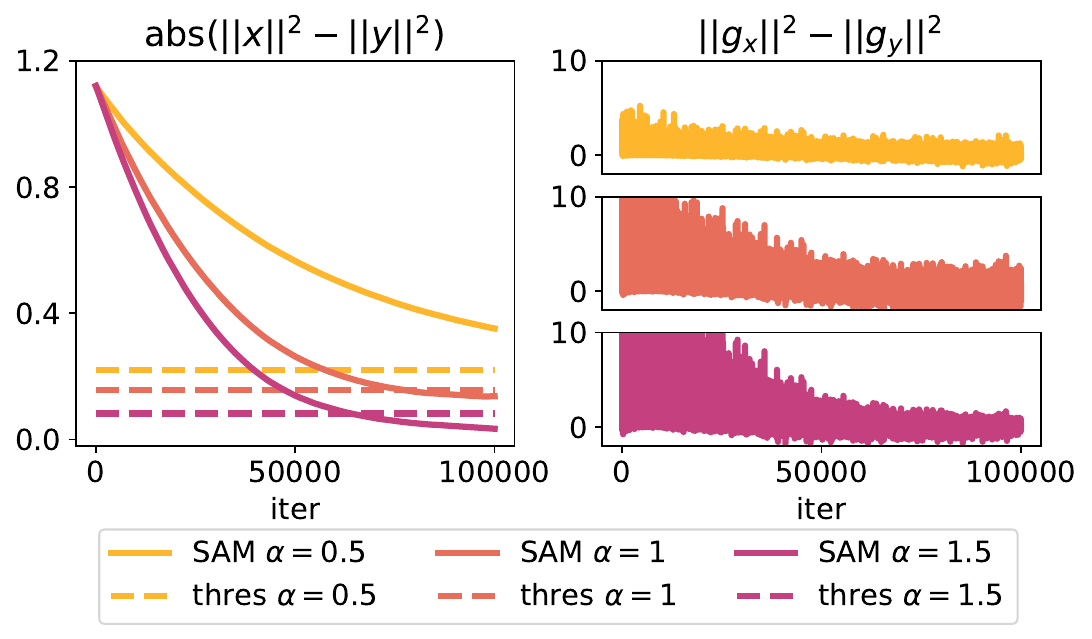}&
		\hspace{-0.1cm}
		\includegraphics[width=.495\textwidth]{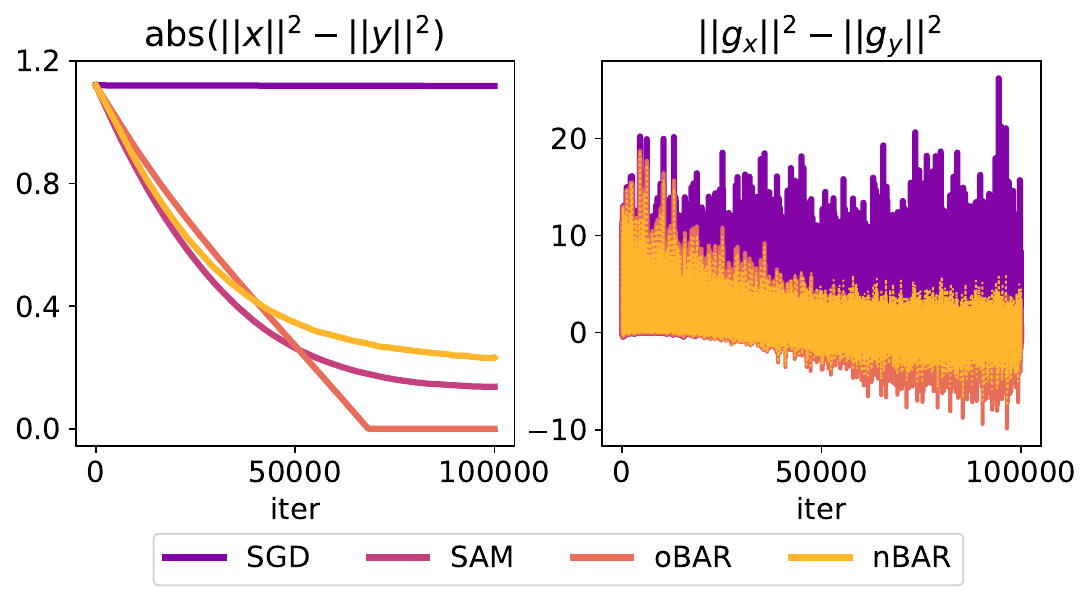}
		\\ 
	 \hspace{-0.2cm}  (a) threshold of balancedness $\bar{\cal B}_t^\rho$ & (b) relation with regularization
	 \end{tabular}
	\vspace{-0.1cm}
	\caption{Implicit regularization of SAM on NOP $\mathbb{E}[ \| \x\y^\top - (\mathbf{A} + \alpha \mathbf{N}) \|^2]$, where $\alpha$ controls SNR. (a) the threshold of balancedness $\bar{\cal B}_t^\rho$ in Corollary \ref{thm.sam-nop-balancing}; (b) implicit vs. explicit regularization. 
	}
	\vspace{-0.3cm}
	 \label{fig.example-nop}
\end{figure*}

\textbf{Noisy data have stronger impact on balancedness.} Although our discussions extend to more general problems, for simplicity we consider the example in Fig. \ref{fig.example-nop} (a), i.e., $\mathbb{E}[ \| \x\y^\top - (\mathbf{A} + \alpha \mathbf{N}) \|^2]$, where $\mathbf{A}$ is ground truth; $\mathbf{N}$ is data noise; and $\alpha$ determines SNR. For this problem, noisy data directly lead to noisy gradients.
It can be seen in Fig. \ref{fig.example-nop} (a) that smaller SNR coincides with faster decreasing of $|{\cal B}_t|$. To explain such a data-responsive behavior in implicit regularization, 
Theorem \ref{thm.nop-sam} states that balancedness changes largely when the difference of $\|  \g_{\y_t}\|$ and $\|  \g_{\x_t}\|$ is large. Since $\mathbb{E}[\|  \g_{\y_t}\|^2 - \|  \g_{\x_t}\|^2] \propto \alpha^2$ if assuming elements of $\mathbf{N}$ to be iid unit Gaussian variables, it thus implies that a small SNR (large $\alpha$) offers large regularization on balancedness.

\textbf{Extension to LoRA (multi-layer two-variable NOP).} For LoRA, the objective is to minimize $D$ blocks of variables simultaneously, i.e., $\min \mathbb{E}_\xi[f^{\xi}(\{ \x_l \y_l^\top\}_{l=1}^D)]$. It is established in Theorem \ref{thm.sam-nop-lora} in appendix that SAM cultivates balancedness in a layer-wise fashion, i.e., the magnitude of ${\cal B}_{t,l}:= \frac{1}{2}\big(\| \x_{t,l} \|^2 - \| \y_{t,l} \|^2\big)$ cannot be large for each $l$. However, the $|\td {\cal B}_{t,l}/ \td t|$ can be ${\cal O}(\sqrt{D})$ times smaller than Theorem \ref{thm.nop-sam} in the worst case because of the additional variables.

\textbf{Validation of IR on modern architectures.} Going beyond the infinitesimally small step size, we adopt $\eta = 0.1$ on  modern language models to validate our theoretical findings. We consider finetuning a RoBERTa-large with LoRA for few-shot learning tasks. More details can be found later in Section \ref{sec.few-shot}. 
Balancedness of SAM and SGD on different layers in various datasets are plotted in Fig. \ref{fig.nop-roberta}. SAM has a clear trend of promoting balancedness, aligning well with our theoretical predictions.

\section{SAM for Overparametrized Problems}\label{sec.sam-op}

Next, we focus on SAM's implicit regularization on OP \eqref{eq.prob-op}. Overparametrization enables SAM to have stronger regularization on balancedness. Subscripts in $f_o$ and $L_o$ are omitted for convenience.
SAM's per iteration update for OP can be summarized as
\begin{subequations}\label{eq.sam-op}
\begin{align}
	\tilde{\x}_t = \x_t +\rho  u_t \y_t, &~~~~~\tilde{\y}_t = \y_t + \rho u_t \x_t	 \\
	\g_{\tilde{\x}_t} = f_t' (\tilde{\x}_t^\top \tilde{\y}_t )  \tilde{\y}_t,	 &~~~~ \g_{\tilde{\y}_t} =  f_t' (\tilde{\x}_t^\top\tilde{\y}_t ) \tilde{\x}_t \\
	\x_{t+1} = \x_t - \eta \g_{\tilde{\x}_t}, &~~~~ \y_{t+1} = \y_t - \eta \g_{\tilde{\y}_t}
\end{align}
\end{subequations}
where $u_t := \sgn(f_t'(\x_t^\top\y_t) ) / \sqrt{\|\x_t \|^2 + \|  \y_t\|^2}$; $f_t$ and $ f_t'$ denote the loss, stochastic gradient on minibatch ${\cal M}_t$, respectively. Different from NOP, SAM has stronger regularization on balancedness, where $|{\cal B}_t|$ decreases whenever the norm of stochastic gradient is large. To see this, it is convenient to define ${\cal C}_t:= | \| \x_t \| - \|  \y_t \| |$. Note that ${\cal C}_t \leq \sqrt{2 |{\cal B}_t|}$.

\begin{theorem}\label{thm.sam-op-balance}
    Consider $\eta \rightarrow 0$ for \eqref{eq.sam-op}. The limiting flow of SAM on OP ensures a decreasing magnitude of ${\cal B}_t$ whenever $|f'_t(\x_t^\top \y_t)| \cdot {\cal C}_t > {\cal O}(\rho L |{\cal B}_t|)$. 
	Moreover, the speed of decrease can be lower- and upper- bounded as
	\begin{align*}
		\rho |f_t'(\x_t^\top \y_t)| \cdot {\cal C}_t - {\cal O}(\rho^2 L |{\cal B}_t|) 	\leq  \big| \frac{d {\cal B}_t}{ d t}  \big|   \leq \rho 
		|f_t'(\x_t^\top \y_t)| \sqrt{ 2 |  {\cal B}_t | } + {\cal O}(\rho^2 L |{\cal B}_t|) .
	\end{align*}
\end{theorem}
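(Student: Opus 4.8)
The plan is to pass to the limiting flow and reduce everything to a single scalar identity. Starting from the discrete SAM update \eqref{eq.sam-op}, I would expand $\|\x_{t+1}\|^2$ and $\|\y_{t+1}\|^2$ to write
\[
{\cal B}_{t+1}-{\cal B}_t = -\eta\big(\x_t^\top\g_{\tilde\x_t}-\y_t^\top\g_{\tilde\y_t}\big)+\tfrac{\eta^2}{2}\big(\|\g_{\tilde\x_t}\|^2-\|\g_{\tilde\y_t}\|^2\big).
\]
Dividing by $\eta$ and sending $\eta\to 0$ kills the ${\cal O}(\eta)$ remainder and yields $\frac{\td{\cal B}_t}{\td t}=-(\x_t^\top\g_{\tilde\x_t}-\y_t^\top\g_{\tilde\y_t})$. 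Substituting $\g_{\tilde\x_t}=f_t'(\tilde\x_t^\top\tilde\y_t)\tilde\y_t$ and $\g_{\tilde\y_t}=f_t'(\tilde\x_t^\top\tilde\y_t)\tilde\x_t$ factors out the scalar $f_t'(\tilde\x_t^\top\tilde\y_t)$, so the whole flow hinges on the bilinear quantity $\x_t^\top\tilde\y_t-\y_t^\top\tilde\x_t$. Plugging in $\tilde\x_t=\x_t+\rho u_t\y_t$ and $\tilde\y_t=\y_t+\rho u_t\x_t$, the cross terms cancel and I obtain the clean identity $\x_t^\top\tilde\y_t-\y_t^\top\tilde\x_t=\rho u_t(\|\x_t\|^2-\|\y_t\|^2)=2\rho u_t{\cal B}_t$, hence $\frac{\td{\cal B}_t}{\td t}=-2\rho u_t f_t'(\tilde\x_t^\top\tilde\y_t){\cal B}_t$.

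Next I would remove the perturbation from inside $f_t'$. Expanding $\tilde\x_t^\top\tilde\y_t=\x_t^\top\y_t+\rho\,\sgn(f_t'(\x_t^\top\y_t))\sqrt{\|\x_t\|^2+\|\y_t\|^2}+{\cal O}(\rho^2)$ (using $u_t(\|\x_t\|^2+\|\y_t\|^2)=\sgn(f_t')\sqrt{\|\x_t\|^2+\|\y_t\|^2}$), the pure algebraic identity $\sgn(a)b=|a|+\sgn(a)(b-a)$ together with the $L$-Lipschitzness of $f_t'$ from Assumption \ref{as.smooth-nop} gives
\[
\sgn(f_t'(\x_t^\top\y_t))\,f_t'(\tilde\x_t^\top\tilde\y_t)=|f_t'(\x_t^\top\y_t)|+r_t,\qquad |r_t|\le L\rho\sqrt{\|\x_t\|^2+\|\y_t\|^2}+{\cal O}(\rho^2 L).
\]
Since $u_t=\sgn(f_t')/\sqrt{\|\x_t\|^2+\|\y_t\|^2}$, the $\sqrt{\|\x_t\|^2+\|\y_t\|^2}$ in $r_t$ cancels the $1/\sqrt{\|\x_t\|^2+\|\y_t\|^2}$ in $u_t$, leaving the master expression
\[
\frac{\td{\cal B}_t}{\td t}=-\frac{2\rho\,|f_t'(\x_t^\top\y_t)|}{\sqrt{\|\x_t\|^2+\|\y_t\|^2}}\,{\cal B}_t+{\cal A}_t,\qquad |{\cal A}_t|={\cal O}(\rho^2 L|{\cal B}_t|).
\]
This already exhibits the central qualitative feature: the leading term has sign opposite to ${\cal B}_t$, so balancedness contracts.

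Both stated inequalities then follow from two elementary bounds on the coefficient $\frac{2|{\cal B}_t|}{\sqrt{\|\x_t\|^2+\|\y_t\|^2}}$. Factoring $2{\cal B}_t=(\|\x_t\|-\|\y_t\|)(\|\x_t\|+\|\y_t\|)$ and using $\sqrt{\|\x_t\|^2+\|\y_t\|^2}\le\|\x_t\|+\|\y_t\|$ yields the lower bound $\frac{2|{\cal B}_t|}{\sqrt{\|\x_t\|^2+\|\y_t\|^2}}\ge{\cal C}_t$; while $2|{\cal B}_t|=|\,\|\x_t\|^2-\|\y_t\|^2\,|\le\|\x_t\|^2+\|\y_t\|^2$ yields $\frac{2|{\cal B}_t|}{\sqrt{\|\x_t\|^2+\|\y_t\|^2}}\le\sqrt{2|{\cal B}_t|}$. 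Multiplying by $\rho|f_t'(\x_t^\top\y_t)|$ and carrying along ${\cal A}_t$ produces exactly the two-sided bound on $|\td{\cal B}_t/\td t|$. For the contraction claim, the leading term has magnitude at least $\rho|f_t'(\x_t^\top\y_t)|{\cal C}_t$ and points against ${\cal B}_t$, so it dominates $|{\cal A}_t|={\cal O}(\rho^2 L|{\cal B}_t|)$ precisely when $|f_t'(\x_t^\top\y_t)|{\cal C}_t>{\cal O}(\rho L|{\cal B}_t|)$, forcing $\frac{\td}{\td t}{\cal B}_t^2=2{\cal B}_t\frac{\td{\cal B}_t}{\td t}<0$.

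The main obstacle is the bookkeeping of the ${\cal A}_t$ remainder and the sign handling for "sufficiently small $\rho$." The delicate point is that $\sgn(f_t'(\x_t^\top\y_t))$ and $u_t$ are only meaningful when $f_t'(\x_t^\top\y_t)\neq 0$; I would lean on the well-defined-iterate assumption to rule out the degenerate case, and note that the decomposition $\sgn(a)b=|a|+\sgn(a)(b-a)$ needs \emph{no} sign preservation of $f_t'$ under perturbation, only $a\neq0$, which is cleaner than demanding $\sgn(f_t'(\tilde\x_t^\top\tilde\y_t))=\sgn(f_t'(\x_t^\top\y_t))$. The remaining care is verifying that the genuine ${\cal O}(\rho^2)$ pieces (the $\rho^2u_t^2\x_t^\top\y_t$ term in $\tilde\x_t^\top\tilde\y_t$ and the quadratic Lipschitz residual) are absorbed into ${\cal A}_t$ with the claimed $|{\cal B}_t|$-scaling — here the cancellation of $\sqrt{\|\x_t\|^2+\|\y_t\|^2}$ is exactly what keeps the remainder proportional to $\rho^2L|{\cal B}_t|$ rather than carrying an uncontrolled norm factor.
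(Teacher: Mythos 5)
Your proposal is correct and follows essentially the same route as the paper: the paper's Lemma \ref{lemma.sam-op-overall} establishes exactly your master expression $\frac{\td{\cal B}_t}{\td t}=-2\rho\,|f_t'(\x_t^\top\y_t)|\,{\cal B}_t/\sqrt{\|\x_t\|^2+\|\y_t\|^2}+{\cal A}_t$ with $|{\cal A}_t|={\cal O}(\rho^2L|{\cal B}_t|)$, using the same $\sgn$ decomposition and Lipschitz bound on $|f_t'-\tilde f_t'|$, and the paper then concludes with the identical factoring argument for the lower bound $\geq{\cal C}_t$ and the bound $2|{\cal B}_t|\leq\|\x_t\|^2+\|\y_t\|^2$ for the upper bound $\sqrt{2|{\cal B}_t|}$. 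Your only additions are making the passage from the discrete update to the limiting flow explicit and noting that the decomposition needs $f_t'(\x_t^\top\y_t)\neq0$ rather than sign preservation under perturbation, which are harmless refinements of the same argument.
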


Given $\rho \rightarrow 0$ and sufficiently noisy data, Theorem \ref{thm.sam-op-balance} implies that $|{\cal B}_t| \rightarrow 0$. Moreover, Theorem \ref{thm.sam-op-balance} also states that the regularization power on balancedness is related to both gradient norm and balancedness itself. The elbow-shaped curve of $|{\cal B}_t|$ in Fig. \ref{fig.example} (b) demonstrates that the regularization power is reducing, as both gradient norm and balancedness shrink over time.

\begin{figure*}[t]
	\centering
	\begin{tabular}{c}
		\hspace{-0.2cm}
		\includegraphics[width=.99\textwidth]{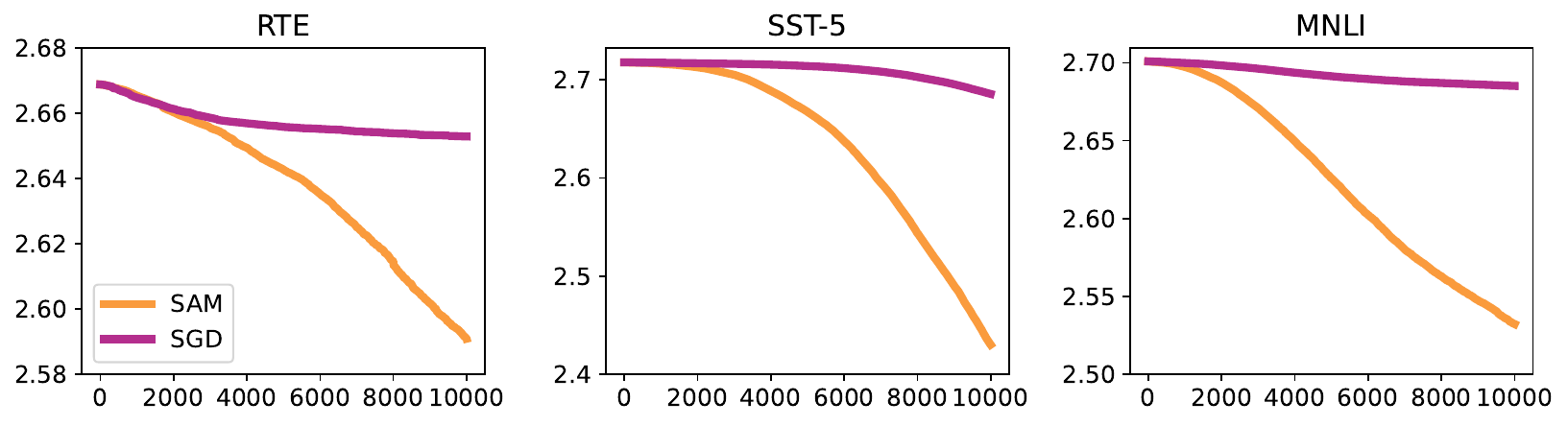}
		\\ 
	 \end{tabular}
	\vspace{-0.1cm}
	\caption{Implicit regularization of SAM on LoRA. We consider few shot learning with LoRA on a RoBERTa-large. For datasets RTE, SST-5, and MNLI, 1st, 12th and 24th query layers' $2| {\cal B}_{t,l} |$ are plotted, respectively. The layers are chosen to represent early, middle, and final stages of RoBERTa.
 The averaged $\bar{\cal B}_{t,l}^\rho$ in Corollary \ref{thm.sam-nop-balancing} is $0.37$, $0.21$, and $0.29$, respectively.}
	\vspace{-0.2cm}
	 \label{fig.nop-roberta}
\end{figure*}

\textbf{Noisy data have stronger impact on balancedness.} As shown in Fig. \ref{fig.example} (b), balancedness is promoted faster on problems with lower SNR. This data-responsive behavior can be already seen from Theorem \ref{thm.sam-op-balance}, because $| \td {\cal B}_t / \td t |$ is directly related with $|f'_t(\x_t^\top \y_t)|$, and $\mathbb{E}[|f'_t(\x_t^\top \y_t)|]$ is clearly larger when data are more noisy. 
In other words, SAM exploits noisy data for possible optimization merits from balancedness (see discussions in Sec. \ref{sec.hard-examples}).
Overall, the implicit regularization on balancedness aligns well with the empirical observations in presence of data anomalies \citep{wang2023, sherborne2023}, where SAM outperforms SGD by a large margin.

\textbf{Extension to $m$-sharpness.} $m$-sharpness is a variant of SAM suitable for distributed training. It is observed to empirically improve SAM's performance \citep{foret2021}. $m$-sharpness evenly divides minibatch ${\cal M}_t$ into $m$ disjoint subsets, i.e., $\{f_{t,j}\}_{j=1}^m$, and perform SAM update independently on each subset; see \eqref{eq.sam-op-m-sharpness} in appendix. It turns out that $m$-sharpness can also be explained using balancedness. With formal proofs in Apdx. \ref{apdx.m-sharpness}, the IR of $m$-sharpness amounts to substitute $|f_t'(\x_t^\top \y_t)|$ in Theorem \ref{thm.sam-op-balance} with $\frac{1}{m} \sum_{j=1}^m | f_{t,j}' (\x_t^\top \y_t)  |$. This means that the regularization on balancedness from $m$-sharpness is more profound than vanilla SAM, because $\frac{1}{m} \sum_{j=1}^m | f_{t,j}' (\x_t^\top \y_t)  | \geq |f_t'(\x_t^\top \y_t)|$.

Finally, we connect balancedness with sharpness on local minima of OP. 
\begin{lemma}\label{lemma.balancedness-sharpness-equivalence}
	 Let ${\cal W}^* = \{ (\x, \y) | \x^\top \y = w, f'(w) = 0, f''(w) > 0 \}$ be non-empty. For the OP problem \eqref{eq.prob-op}, minimizing sharpness within ${\cal W}^*$ is equivalent to finding ${\cal B} = 0$ in ${\cal W}^*$. 
\end{lemma}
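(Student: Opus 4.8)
The plan is to route everything through the Hessian: on ${\cal W}^*$ sharpness is controlled by the top Hessian eigenvalue, that eigenvalue turns out to be a positive scalar times $\|\x\|^2+\|\y\|^2$, and minimizing this norm under the constraint $\x^\top\y=w$ forces balancedness. First I would differentiate $h(\x,\y)=f(\x^\top\y)$. With $w:=\x^\top\y$, the gradient blocks are $\nabla_\x h=f'(w)\y$ and $\nabla_\y h=f'(w)\x$, so the condition $f'(w)=0$ makes every point of ${\cal W}^*$ a genuine critical point. Differentiating once more and using $f'(w)=0$ to kill the identity term $f'(w)\mathbf{I}$ in the cross block, the Hessian reduces to the rank-one matrix
\[
    H=f''(w)\begin{pmatrix}\y \\ \x\end{pmatrix}\begin{pmatrix}\y \\ \x\end{pmatrix}^\top .
\]
Since $f''(w)>0$, $H\succeq 0$ has exactly one nonzero eigenvalue, hence $\lambda_{\max}(H)=f''(w)\big(\|\x\|^2+\|\y\|^2\big)$.

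Next I would convert sharpness into this eigenvalue. Because $\nabla h=0$ on ${\cal W}^*$, a second-order Taylor expansion gives ${\cal S}(\x,\y)=\max_{\|\eps\|\le\rho}\tfrac{1}{2}\eps^\top H\eps+o(\rho^2)=\tfrac{\rho^2}{2}\lambda_{\max}(H)+o(\rho^2)$, which is precisely the scaled largest-eigenvalue approximation of sharpness \citep{zhuang2022}. Thus, to leading order in $\rho$, minimizing ${\cal S}$ over ${\cal W}^*$ is equivalent to minimizing $\lambda_{\max}(H)=f''(w)(\|\x\|^2+\|\y\|^2)$. On ${\cal W}^*$ the value $w$, and therefore the scalar $f''(w)>0$, is fixed, so the objective collapses to $\|\x\|^2+\|\y\|^2$ subject to $\x^\top\y=w$.

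The final step is the scalar inequality $\|\x\|^2+\|\y\|^2\ge 2\|\x\|\,\|\y\|$, with equality iff $\|\x\|=\|\y\|$, i.e. iff ${\cal B}=\tfrac{1}{2}(\|\x\|^2-\|\y\|^2)=0$; equivalently, along a scale-invariance orbit $(\alpha\x,\y/\alpha)$ the map $\alpha\mapsto\alpha^2\|\x\|^2+\|\y\|^2/\alpha^2$ is minimized exactly at the balanced scaling $\alpha^2=\|\y\|/\|\x\|$. This pins the sharpness minimizer at ${\cal B}=0$ and yields the claimed equivalence. The hard part is the sharpness-to-eigenvalue conversion: the clean leading-order identity relies on the exact vanishing of $\nabla h$ on ${\cal W}^*$ (which does hold) together with control of the $o(\rho^2)$ remainder, so that the leading-order minimizer stays the minimizer for small $\rho$; this is why I would invoke the existing approximation rather than re-derive it. A secondary subtlety worth stating explicitly is the reading of ``within ${\cal W}^*$'': per scale-invariance orbit, ${\cal B}=0$ is both necessary and sufficient, whereas over the whole fiber $\{\x^\top\y=w\}$ attaining the global minimum $2|w|$ additionally forces $\x,\y$ to be collinear — either reading singles out balancedness as the defining condition.
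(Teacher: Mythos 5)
Your proposal is correct and follows essentially the same route as the paper: the Hessian on ${\cal W}^*$ collapses (since $f'(w)=0$) to the rank-one matrix $f''(w)[\y^\top,\x^\top]^\top[\y^\top,\x^\top]$ with top eigenvalue $f''(w)(\|\x\|^2+\|\y\|^2)$, and AM-GM then pins the minimizer at $\|\x\|=\|\y\|$, i.e. ${\cal B}=0$. Your added care about the sharpness-to-eigenvalue conversion and the orbit-versus-fiber reading only makes explicit what the paper's three-line proof leaves implicit.
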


This link showcases that by studying balancedness we can also obtain the implicit regularization on sharpness for free. A concurrent work also links balancedness with sharpness (the largest eigenvalue) for some one-hidden layer neural networks \citep{singh2024closed}.
Compared with \citep{wen2023}, this is achieved with less assumptions and simplified analyses. More importantly, balancedness enables us to cope with arbitrary batchsize, to explain SAM's stronger regularization with noisy data, and to extend results to $m$-sharpness.

\section{Implicit Regularization Made Explicit}\label{sec.explicify}

Next, insights from our theoretical understanding of SAM are leveraged to build practical tools. We adopt 
LoRA \citep{hu2021lora} as our major numerical benchmark for scale-invariant problems given its prevalence in practice. More diverse examples on both OP and NOP can be found in Apdx. \ref{apex.sec.op-nop-examples}. Compared to full parameter-tuning, LoRA is more economical in terms of memory not only for finetuning, but also for serving multiple downstream tasks.  
LoRA and its variants are actively developed and well welcomed by the community; see e.g., HuggingFace's
PEFT codebase.\footnote{\url{https://github.com/huggingface/peft}}  

\subsection{Overview of LoRA}

Given a pretrained model with frozen weight $\mathbf{W}_l \in \mathbb{R}^{d_1 \times d_2}$ on a particular layer $l$, the objective of LoRA is to find low rank matrices $\mathbf{X}_l \in \mathbb{R}^{d_1 \times r}$, and $\mathbf{Y}_l \in \mathbb{R}^{d_2 \times r}$ with $r \ll \min\{d_1,d_2\}$ such that the loss is minimized for a downstream task, i.e., 
\begin{align}
    \min_{ \{\mathbf{X}_l, \mathbf{Y}_l \}_l} {\cal L}\big( \{\mathbf{W}_l + \mathbf{X}_l \mathbf{Y}_l^\top \}_l\big).
\end{align}
LoRA enjoys parameter efficiency for finetuning thanks to the low-rank matrices $\mathbf{X}_l$ and $\mathbf{Y}_l$. For instance, it only requires 0.8M trainable parameters to finetune a 355M-parameter RoBERTa-large \citep{hu2021lora}.
The outer product of $\mathbf{X}_l$ and $\mathbf{Y}_l$ induces scale-invariance, and the number of variables renders it NOP. 
The downside of LoRA, on the other hand, is the drop on test performance due to the parsimony on trainable parameters. Unbalancedness is also unavoidable for LoRA, due to the need of initializing at $\mathbf{X}_l\sim{\cal N}(0, \sigma^2), \mathbf{Y}_l=\mathbf{0}$; see an example of RoBERTa-large in Fig. \ref{fig.nop-roberta}. The unbalancedness leads to instability of LoRA when finetuning RoBERTa on datasets SST-2 and MNLI; see more details in Apdx. \ref{apdx.sec.roberta-full}.

Integrating SAM with LoRA is a case with mutual benefits -- LoRA reduces the additional memory requirement of SAM, while SAM not only overcomes the distributional shift in finetuning \citep{zhou2021sharpness}, but also mitigates the possible inefficiency associated with LoRA's unbalancedness.

\subsection{Balancedness-Aware Regularization (BAR)}

However, directly applying SAM variants on LoRA exhibits two concerns: i) SAM doubles computational cost due to the need of two gradients; and ii) additional efforts are required to integrate SAM with gradient accumulation and low-precision training \citep{gradientaccumulation}, which are common techniques for memory and runtime efficiency in large-scale finetuning. Note that concern i) is annoying given the size of language models, especially in setups involving model parallelism. 

Our balancedness-aware regularization (BAR) is a highly efficient approach to address both concerns, and it fixes the accuracy drop of LoRA relative to full-parameter finetuning. BAR is also the \textit{first} efficient SAM variant derived from implicit regularization. The key observation for our algorithm design is that SAM's implicit regularization on balancedness can be achieved with an explicit regularizer $\alpha_t | \x^\top \x - \y^\top \y |$. This regularizer originates from matrix sensing; see e.g., \citep{tu2016low, ge2017no}. For OP, choosing $\alpha_t:= {\cal O} (|f'(\x_t^\top \y_t)|/\sqrt{\| \x_t \|^2 + \| \y_t \|^2})$ recovers SAM's dynamic on ${\cal B}_t$ up to an error of ${\cal O}(\rho^2)$; cf. Lemma \ref{lemma.sam-op-overall} in appendix. By ignoring this error, it can be seen that ${\cal B}_t$ decreases when $\| \x_t \| \geq \| \y_t \|$. Following this dynamic, we regulate balancedness based on whether $\| \x_t \| \geq \| \y_t \|$. The resultant approach is termed as overparamterized BAR (oBAR) to reflect its source in OP.

On the other hand, because LoRA is NOP inherently, we take inspiration from Theorem \ref{thm.nop-sam} -- dropping the term ${\cal A}_t$ and mimicking dynamics of SAM. In particular, we regulate the objective with  $\alpha_t (\x^\top \x - \y^\top \y )$ if $\| \g_{\x_t} \|^2 < \| \g_{\y_t} \|^2$; otherwise $\alpha_t (\y^\top \y - \x^\top \x )$. The resultant approach is termed as nBAR. A graphical illustration can be found in Fig. \ref{fig.example-nop} (b). It can be observed that nBAR shares similar performance as SAM on NOP. Both nBAR and oBAR can be implemented in the same manner as weight decay, and their detailed steps are summarized in Algs. \ref{alg.nxr} and \ref{alg.oxr}, respectively.

Another benefit of BAR, in additional to the lightweight computation, is that it can be applied individually on each LoRA layer. As previously discussed (cf. Theorem \ref{thm.sam-nop-lora}), the number of layers has a negative impact on balancedness. By overcoming this ``curse of multi-layer'', BAR can induce better test performance over SAM.

\textbf{Schedule of $\alpha_t$.} In both nBAR and oBAR, one can employ a decreasing scheduler for $\alpha_t$ for algorithmic flexibility. This is motivated by the fact that for both NOP and OP problems, the implicit regularization of SAM is less powerful after sufficient balancedness or near optimal. Commonly adopted cosine and linear schedules work smoothly.

\begin{table}[t]
\centering
    \begin{tabular}{c c}
\begin{minipage}[t]{0.5\linewidth}
    \begin{algorithm}[H]
        \caption{nBAR}
        \label{alg.nxr}
    \begin{algorithmic}[1]
    \State \textbf{Initialize:} learning rate $\{\eta_t\}$, regularization coefficient $\{\alpha_t\}$
    \For {$t=0,\dots,T-1$}
       \State Get stochastic gradient $\g_{\x_t}$ and $\g_{\y_t}$
        \If{$\| \g_{\x_t}\| \geq \| \g_{\y_t}\|$}
        	\State $\x_t \leftarrow (1 + \alpha_t \eta_t) \x_t$
        	\State $\y_t \leftarrow (1 - \alpha_t \eta_t) \y_t$
        \Else  
       		\State $\x_t \leftarrow (1 - \alpha_t \eta_t) \x_t$
        	\State $\y_t \leftarrow (1 + \alpha_t \eta_t) \y_t$
        \EndIf
        \State Optimizer update (via Adam or SGD)
    \EndFor
    \end{algorithmic}
    \end{algorithm}
\end{minipage}
\hspace{-2mm}
&
\begin{minipage}[t]{0.5\linewidth}
    \vspace{0.pt}
    \begin{algorithm}[H]
     \caption{oBAR}
     \label{alg.oxr}
    \begin{algorithmic}[1]
    \State \textbf{Initialize:} learning rate $\{\eta_t\}$, regularization coefficient $\{\alpha_t\}$
    \For {$t=0,\dots,T-1$}
       \State Get stochastic gradient $\g_{\x_t}$ and $\g_{\y_t}$
        \If{$\| \x_t\| \geq \| \y_t \|$}
        	\State $\x_t \leftarrow (1 - \alpha_t \eta_t) \x_t$
        	\State $\y_t \leftarrow (1 + \alpha_t \eta_t) \y_t$
        \Else  
       		\State $\x_t \leftarrow (1 + \alpha_t \eta_t) \x_t$
        	\State $\y_t \leftarrow (1 - \alpha_t \eta_t) \y_t$
        \EndIf
        \State Optimizer update (via Adam or SGD)
    \EndFor
     \end{algorithmic}
    \end{algorithm}
\end{minipage}
    \end{tabular}
\end{table}

\section{Numerical Experiments}\label{sec.numerical}

To demonstrate the effectiveness of BAR, numerical experiments are conducted on various deep learning tasks using language models (LMs). Bold and underlined numbers are used to highlight the best and second best performance, respectively. More experimental details can be found in Apdx. \ref{apdx.xr}. Code is available at \url{https://github.com/BingcongLi/BAR}.

\subsection{Few-shot Learning with RoBERTa-large and OPT-1.3B}\label{sec.few-shot}

\begin{table*}[t]
    \centering
    \caption{Few shot learning on RoBERTa (355M). $\dagger$ denotes results reported by \citep{malladi2023}}
    \renewcommand{\arraystretch}{1.1}
    \begin{tabular}{cccccccc}
        \toprule
         RoBERTa & SST-2 & SST-5 & SNLI & MNLI & RTE & TREC & avg ($\uparrow$) \\
        \midrule
        LoRA 
        & 91.1$_{\pm0.8}$
        & 52.3$_{\pm2.9}$
        & 84.3${\pm0.3}$
        & 78.1$_{\pm1.3}$
        & 77.5$_{\pm2.3}$
        & 96.6$_{\pm1.0}$
        & 80.0
        \\
        LoRA-SAM
        & \textbf{92.2}$_{\pm0.4}$
        & 54.2$_{\pm2.0}$
        & \textbf{85.5}$_{\pm0.7}$
        & \textbf{78.7}$_{\pm 1.0}$
        & \underline{80.6}$_{\pm 4.3}$
        & \textbf{96.7}$_{\pm0.2}$
        & \textbf{81.3}
		\\
        \textbf{LoRA-oBAR}
        & \underline{91.5}$_{\pm0.9}$ 
        & \underline{54.5}$_{\pm2.7}$
        & \underline{84.9}$_{\pm 0.5}$
        & \underline{78.3}$_{\pm 2.2}$
        & 79.7$_{\pm2.0}$
        & \textbf{96.7}$_{\pm 0.5}$
        & 80.9
        \\
        \textbf{LoRA-nBAR}
        & 91.4$_{\pm0.5}$
        & \textbf{55.0}$_{\pm2.0}$
        & \underline{84.9}$_{\pm 1.4}$
        & 78.1$_{\pm 0.2}$
        & \textbf{81.0}$_{\pm1.0}$
        & \textbf{96.7}$_{\pm 1.0}$
        & \underline{81.2}
        \\
        \midrule
        Zero-Shot$^\dagger$
        & 79.0
        & 35.5
        & 50.2
        & 48.8
        & 51.4
        & 32.0 
        & 49.5 \\
        \bottomrule
    \end{tabular}
    \label{tab.few-shot-roberta}
\end{table*}

\begin{table*}[t]
    \centering
    \caption{Runtime of BAR (normalized to LoRA, 1x) on OPT-1.3B. SAM relies on FP32 for  stability. LoRA and BAR adopt FP16 training since this is the default choice for large models. nBAR and oBAR share similar runtime, hence reported together.}
    \renewcommand{\arraystretch}{1.1}
    \begin{tabular}{ccccccc}
        \toprule
        runtime ($\downarrow$) & SST-2 & CB & RTE & COPA & ReCoRD & SQuAD  \\
        \midrule
        LoRA-SAM
        & 4.43x
        & 3.34x
        & 4.10x
        & 3.28x
        & 4.35x
        & 3.54x
        \\
        \textbf{LoRA-BAR}
        & \textbf{1.05x}
		& \textbf{1.03x}
	 	& \textbf{1.04}x
        & \textbf{1.05}x
        & \textbf{1.04}x
        & \textbf{1.03}x
   		\\
        \bottomrule
    \end{tabular}
    \label{tab.few-shot-opt-runtime}
	\vspace{-0.2cm}
\end{table*}

\begin{table*}[t]
    \centering
    \caption{Performance of BAR for few shot learning using OPT-1.3B.}
    \begin{tabular}{cccccccc}
        \toprule
        OPT-1.3B & SST-2 & CB & RTE & COPA & ReCoRD & SQuAD  & avg ($\uparrow$) \\
        \midrule
        Prefix
        & 92.9$_{\pm1.0}$ 
        & 71.6$_{\pm3.0}$ 
        & 65.2$_{\pm2.6}$
        & 73.0$_{\pm1.0}$
        & 69.7$_{\pm1.0}$
        & 82.1$_{\pm1.4}$
        & 75.8
        \\
        LoRA
        & 93.1$_{\pm0.2}$ 
        & 72.6$_{\pm3.7}$
        & 69.1$_{\pm4.8}$
        & \textbf{78.0}$_{\pm0.0}$
        & 70.8$_{\pm1.0}$
        & 81.9$_{\pm1.8}$
        & 77.6
        \\
        LoRA-SAM
        & 93.5$_{\pm0.5}$
        & 74.3$_{\pm1.0}$
        & \textbf{70.6}$_{\pm2.7}$
        & \textbf{78.0}$_{\pm0.0}$
        & \underline{70.9}$_{\pm1.2}$
        & \textbf{83.0}$_{\pm0.7}$
        & 78.4
        \\
        \textbf{LoRA-oBAR}
        & \underline{93.6}$_{\pm0.6}$
		& \underline{75.6}$_{\pm4.5}$
        & 70.4$_{\pm4.8}$
        & \textbf{78.0}$_{\pm0.0}$
        & \underline{70.9}$_{\pm0.8}$
        & \underline{82.5}$_{\pm0.5}$
        & \underline{78.5}
        \\
        \textbf{LoRA-nBAR}
        & \textbf{93.7}$_{\pm0.7}$
		& \textbf{79.8}$_{\pm4.4}$
        & \underline{70.5}$_{\pm2.4}$
        & \textbf{78.0}$_{\pm0.0}$
        & \textbf{71.0}$_{\pm1.0}$
        & 82.3$_{\pm1.8}$
        & \textbf{79.2}
        \\
        \midrule
        Zero-Shot 
        & 53.6
        & 39.3
        & 53.1
        & 75.0
        & 70.2
        & 27.2
        & 53.1
		\\
        \bottomrule
    \end{tabular}
    \label{tab.few-shot-opt}
\end{table*}

\begin{table*}[t]
    \centering
    \caption{Finetuning RoBERTa (355M) with BAR. Results marked with $\dagger$ are taken from \citep{hu2021lora}, and those with $*$ refer to Adapter$^\text{P}$ in \citep{hu2021lora}.} 
    \renewcommand{\arraystretch}{1.1}
    \begin{tabular}{c|c|cccccccc}
        \toprule
        RoBERTa     & \# para   & STS-B         & RTE   & \small{MRPC} & \small{CoLA} & \small{QQP} & avg ($\uparrow$) \\
        \midrule
        FT$^\dagger$ & 355M     & 92.4  & 86.6 & 90.9 & 68.0 & 90.2 & 85.6 \\
        \midrule 
        Adapter$^*$  & 0.8M     & 91.9$\pm$0.4  & 80.1$\pm$2.9  & 89.7$\pm$1.2 & \textbf{67.8}$\pm$2.5 & \textbf{91.7}$\pm$0.2 & 84.2 \\
        LoRA         & 0.8M     & 92.4$\pm$0.1  & 88.2$\pm$0.6  & 89.6$\pm$0.5 & 64.8$\pm$1.4 & 91.4$\pm$0.1 & 85.3 \\
        \textbf{LoRA-oBAR} &0.8M & \textbf{92.6}$\pm$0.1  & \underline{88.7}$\pm$0.2  & \textbf{90.3}$\pm$0.9 & 65.1$\pm$1.0 & \underline{91.6}$\pm$0.1 & \underline{85.7}
        \\
        \textbf{LoRA-nBAR} &0.8M & \textbf{92.6}$\pm$0.2  & \textbf{89.2}$\pm$1.3  & \textbf{90.3}$\pm$0.4 & \underline{65.6}$\pm$1.2 & \underline{91.6}$\pm$0.1 & \textbf{85.9}  \\
        \bottomrule
    \end{tabular}
    \label{tab.lora-roberta-ft}
\end{table*}

The first task to consider is few-shot learning with LoRA \citep{malladi2023}, where the goal is to finetune a language model with a small training set. We follow the settings in \citep{malladi2023}, and choose the backbones as RoBERTa-large, a masked LM with 355M parameters, and OPT-1.3B, an autoregressive LM \citep{roberta2019,opt2022}.

Results of the proposed oBAR and nBAR on RoBERTa-large are summarized in Table \ref{tab.few-shot-roberta}. As indicated by the zero-shot performance, the distributional shift between finetuning and pretraining datasets is obvious. This is a natural setting suitable for SAM and BAR. The averaged test accuracy is improved by $0.9$ and $1.2$ via oBAR and nBAR, respectively. The performance of nBAR is close to SAM. Moreover, BAR saves 74\% additional runtime of SAM; see more details in Table \ref{tab.time-mem} in the appendix.

The proposed nBAR and oBAR perform even better when scaling up to OPT-1.3B.
BAR reduces the overhead of SAM by more than $95\%$ because of its compatibility with FP16 training; see Table \ref{tab.few-shot-opt-runtime}.
Note that applying FP16 directly with SAM leads to underflow; see more in Apdx. \ref{apdx.xr}. This signifies the flexibility of BAR over SAM when scaling to large problems, as FP16 is the default choice for LMs. Prefix tuning \citep{li2021prefix} is also included as a benchmark for comparisons on test performance. We report F1 score for SQuAD and accuracy for other datasets in Table \ref{tab.few-shot-opt}. The averaged improvement over LoRA is $0.9$ and $1.6$ from oBAR and nBAR, respectively, both outperforming SAM. 
We conjecture that the performance gap between SAM and BAR comes from their different effectiveness in regularizing balancedness. Balancedness of a particular layer is decreasing slower in SAM due to multiple layers, as shown in Theorem \ref{thm.sam-nop-lora}, while BAR promotes balancedness faster as it can be applied individually on each LoRA layer.
Comparing the absolute improvement for RoBERTa-large (355M) and OPT-1.3B, it is conjectured that BAR has more potential for larger models, and the verification is left for future due to hardware constraints.

\subsection{Finetuning with RoBERTa-large}

Having demonstrated the power of BAR in few-shot learning, we then apply it to finetune RoBERTa-large with LoRA. The results can be found in Table \ref{tab.lora-roberta-ft}.
It can be observed that nBAR and oBAR improve the performance of LoRA and prefix tuning \citep{li2021prefix} on most of tested datasets. On average, oBAR leads to a gain of $0.4$, and nBAR raises the test performance by $0.6$. BAR thereby fills the gap of test performance between LoRA (0.8M) and full-parameter (355M) finetuning.

\subsection{Text Generation on GPT2-medium}

Lastly, we consider BAR on a text-generation problem using GPT2-medium, a model with 345M parameters. Results on WebNLG \citep{gardent2017webnlg} are reported in Table \ref{tab.nmt}. It can be seen that oBAR matches the performance of prefix tuning, while nBAR achieves the best BLEU score.

\begin{table}[!t]
	\centering
         \caption{Finetuning GPT2 (345M) with BAR on WebNLG. Results of prefix tuning and full-parameter finetuning are obtained from \citep{hu2021lora}.}
	\tabcolsep=0.4cm
	\renewcommand{\arraystretch}{1.1}
	\begin{tabular}{c|ccccc}
	\toprule
	GPT2                & FT$^*$     &  Prefix$^*$       & LoRA            & \textbf{LoRA-oBAR}  & \textbf{LoRA-nBAR}  \\
	\midrule
	\# param            & 354M       & 0.35M             & 0.35M           & 0.35M  & 0.35M \\
	BLEU ($\uparrow$)   & 46.5       & 55.1    & 54.99$\pm${0.24}  & \underline{55.15}$\pm${0.19} & \textbf{55.20}$\pm${0.16}  \\
	\bottomrule
	\end{tabular}
	\label{tab.nmt}
\end{table}

\section{Discussions}\label{sec.discussion}

This work provides theoretical and empirical evidence on the implicit regularization of SAM for both scale-invariant NOP and OP problems. Balancedness, as an alternative to commonly adopted sharpness, is employed as the metric to capture global and data-responsive behaviors of SAM. We find that i) SAM promotes variables to have (relatively) balanced norms; and ii) noisy data have stronger impact on balancedness. 
Lastly, we explicify the implicit regularization as a data-driven regularizer to foster the design of a computationally efficient SAM variant, termed BAR. The effectiveness of BAR is demonstrated using various tasks on RoBERTa-large, GPT2 and OPT. BAR saves $95\%$ overhead of SAM and enhances the accuracy of LoRA to the level of full-parameter finetuning.

\textbf{Limitation and Future directions.}

Our approach, BAR, is best applied on scale-invariant modules in neural networks. Finetuning language models with LoRA, as a popular option in practice, is a setting naturally suitable for our approach. 
However, our approach does not apply for linear models, e.g., logistic regression.
Regarding future directions, an interesting one is whether SAM has other forms of implicit regularization beyond balancedness and sharpness.
The exploration of other scale-invariant architectures beyond LoRA, e.g., the softmax function in attention, is also deferred to future work.

\section*{Acknowledgements}
We thank anonymous reviewers for their suggestions. BL is supported by Swiss National Science Foundation (SNSF) Project Funding No. 200021-207343. LZ gratefully acknowledges funding by the Max Planck ETH Center for Learning Systems (CLS). NH is supported by ETH research grant funded through ETH Zurich Foundations and SNSF Project Funding No. 200021-207343.

\bibliographystyle{plainnat}
\bibliography{myabrv,datactr}

\newpage
\onecolumn
\appendix

\begin{center}
{\Large  \bf Supplementary Document for \\ \vspace{0.1cm} ``Implicit Regularization of Sharpness-Aware Minimization \\for Scale-Invariant Problems'' }
\end{center}

\section{Missing Details}

\subsection{Broad Impact}\label{apdx.sec.broad-impact}

The theories and approaches are applicable across various scenarios. The proposed algorithmic tool simplifies finetuning language models, improves performance of downstream tasks, and consumes less resource compared to SAM. For tasks such as sentiment classification, our approach facilitates real world systems such as recommendation by improving accuracy.  
However, caution is advised when the downstream tasks of language models involve generation. For these tasks, users should thoroughly review generated content and consider to implement gating methods to ensure safety and trustworthiness.

\subsection{More on Related Work}\label{apdx.sec.related-works}

\textbf{Sharpness and generalization.}
Sharpness is observed to relate with generalization of SGD in deep learning \citep{keskar2016}. It is found that sharpness varies with the ratio between learning rate and batchsize in SGD \citep{Jastrzebski2018}. Large scale experiments also indicate sharpness-based measures align  with generalization in practical scenarios \citep{jiang2020, sam4vit}. Theoretical understandings on generalization error using sharpness-related metrics can be found in e.g., \citep{dziugaite2017,behnam2017,wang2022}. There is a large body of literature exploring sharpness for improved generalization. Entropy SGD leverages local entropy in search of a flat valley \citep{pratik2017}. A similar approach as SAM is also developed in \citep{wu2020} while putting more emphases on adversarial robustness. Stochastic weight averaging is proposed for finding flatter minima in \citep{izmailov2018}. It is shown later in \citep{wen2023more} that the interplay between sharpness and generalization subtly depends on data distributions and model architectures, and there are unveiled reasons beyond sharpness for the benefit of SAM. 

\textbf{SAM variants.} Although SAM is successful in various deep learning tasks, it can be improved further by leveraging local geometry in a fine-grained manner. For example, results in \citep{yang2022,barrett2020implicit} link SAM with gradient norm penalization. \citet{zhuang2022} optimize sharpness gap and training loss jointly. A more accurate manner to solve inner maximization in SAM is developed in \citep{li2023vasso}. SAM and its variants are also widely applied to domain generalization problems; see e.g., \citep{zhang2023domain, wang2023}.

\textbf{Other perspectives for SAM.} The convergence of SAM is comprehensively studied in \citep{si2024practical}.
\citet{agarwala2023} focus on the edge-of-stability-like behavior of unnormalized SAM on quadratic problems.
\citet{dai2024crucial} argue that the normalization in SAM, i.e., line 5 of Alg. \ref{alg.sam}, is critical. Sharpness measure is generalized to any functions of Hessian in \citep{tahmasebi2024universal}. However, even the generalized sharpness cannot provide implicit regularization for simple functions such as $h(x, y) = xy$, because the Hessian is the same for all $(x, y)$. In addition, when Hessian is negative definite, some of the generalized sharpness measures (e.g., determinate of Hessian) may not be necessarily meaningful.

\textbf{Implicit regularization.} The regularization effect can come from optimization algorithms rather than directly from the regularizer in objective functions. This type of the behavior is termed as implicit regularization or implicit bias of the optimizer. The implicit regularization of (S)GD is studied from multiple perspectives, such as margin 
 \citep{ji2018gradient, lyu2019gradient}, kernel \citep{arora2019fine}, and Hessian \citep{li2021what,arora2022eos}. Initialization can also determine the implicit regularization  \citep{woodworth2020}. Most of these works explore the overparametrization regime.

\textbf{LoRA and parameter-efficient finetuning.} LoRA \citep{hu2021lora}, our major numerical benchmark, is an instance of parameter-efficient finetuning (PEFT) approaches. PEFT reduces the resource requirement for large language models on various downstream tasks, at the cost of possible accuracy drops on test performance. The latter, together with the transfer learning setup jointly motivate the adoption of SAM. Other commonly adopted PEFT methods include, e.g., adapters \citep{houlsby2019} and prefix tuning \citep{li2021prefix}. There are also various efforts to further improve LoRA via adaptivity \citep{zhang2023adaptive}, chaining \citep{xia2024chain}, aggressive parameter saving \citep{kopiczko2023vera}, low-bit training \citep{dettmers2024qlora}, and modifications for long-sequences \citep{chen2023longlora}. Most of these efforts are orthogonal to BAR proposed in this work.

\subsection{Additional Applications of Scale-Invariant Problems in Deep Learning}\label{apex.sec.op-nop-examples}

\textbf{Attention in transformers.} Attention is one of the backbones of modern neural networks \citep{vaswani2017attention}. Given the input $\mathbf{D}$, attention can be written as
\begin{align}
	\min_{\mathbf{Q}, \mathbf{K}, \mathbf{V}} ~\text{softmax}\bigg( \frac{1}{\alpha} \mathbf{D} \mathbf{Q} \mathbf{K}^\top \mathbf{D}^\top  \bigg) \mathbf{D}\mathbf{V}
\end{align}
where $\{  \mathbf{Q}, \mathbf{K}, \mathbf{V} \} $ are query, key, and value matrices to be optimized. This is a scale-invariant problem because scaling $\{  \mathbf{Q}, \mathbf{K}\}$ does not modify the objective function. Considering the number of variables, the optimization of $\{  \mathbf{Q}, \mathbf{K}\}$ is considered as OP.

\textbf{Two-layer linear neural networks.} This problem is a simplified version of two-layer ReLU neural nets, and its objective can be defined as
\begin{align}\label{apdx.eq.linear-network}
	f (\mathbf{W}_1, \mathbf{W}_2) = \frac{1}{2}\mathbb{E}_{(\mathbf{a}, \mathbf{b})} \big[ \| \mathbf{W}_1\mathbf{W}_2 \mathbf{a} -\mathbf{b} \|^2 \big].
\end{align}
This is usually adopted as an example for overparametrization, and can be extended to deeper linear neural networks; see e.g., \citep{arora2018convergence}. Moreover, it is known that the optimization for such problem is quite challenging, and GD can fail to converge if $\mathbf{W}_1$ and $\mathbf{W}_2$ are not initialized with balancedness \citep{arora2018convergence}. 
An extension of \eqref{apdx.eq.linear-network} is two-layer ReLU networks, which are widely adopted in theoretical frameworks to understand the behavior of neural networks. ReLU networks are scale-invariant, but only when the scaling factor is positive.

\textbf{Other examples.} For ResNets, two-variable scale-invariant submodules also include affine BatchNorm and the subsequent convolutional layer. For transformers, scale-invariant submodules besides attention include LayerNorm and its subsequent linear layer.

\subsection{SAM Pays More Attention to Difficult Examples}\label{apdx.example-problem}

\textbf{Testing example for NOP.} \textit{The problem presented below is adopted in Fig. \ref{fig.example} (a) and Fig. \ref{fig.example-nop} for visualization of SAM's behavior on NOP.}
We consider a special case of problem \eqref{eq.prob-nop}, where the goal is to fit (rank-1) matrices by minimizing
\begin{align}\label{eq.prob-mtrx-fac}
	f_n(\x, \y) = \mathbb{E}_\xi \big[ \| \x \y^\top - (\mathbf{A} + \alpha\mathbf{N}_\xi) \|^2 \big]
\end{align}
where $\mathbf{A} \in \mathbb{R}^{3 \times 3}:= \text{diag} [0.5, 0, 0]$ and $\mathbf{N}_\xi \in \mathbb{R}^{3 \times 3}$ denote the ground truth and Gaussian noise, respectively; and $\alpha$ controls the SNR. Here we choose $\mathbf{N}_\xi:= \text{diag} [1.0, 0.8, 0.5] \mathbf{U}_\xi$, where entries of $\mathbf{U}_\xi$ are unit Gaussian random variables.

In our simulation of Fig. \ref{fig.example} (a), we set the step size to be $\eta=10^{-4}$ and the total number of iterations as $T=10^{5}$ for both SGD and SAM. Parameter $\rho$ is chosen as $0.1$ for SAM. For both algorithms, initialization is $\x_0= [0.2, -0.1, 0.3]^\top$ and $\y_0= -3 \x_0$. Note that we choose a small step size to mimic the settings of our theorems.

\textbf{Testing example for OP.} 
\textit{The problem presented below is adopted in Fig. \ref{fig.example} (b) for visualization of SAM on OP.}
A special case of problem \eqref{eq.prob-op} is considered with objective function 
\begin{align}\label{eq.prob-}
	f_o(\x, \y) = \mathbb{E}_\xi \big[  \| \x^\top \y - (a + \alpha n_\xi) \|^2 \big]
\end{align}
where $a \in \mathbb{R}$ and $n_\xi \in \mathbb{R}$ denote the ground truth and Gaussian noise, respectively. We choose $a= 0.5$ and $ n_\xi$ as a unit Gaussian random variable. Here, $\alpha$ controls the SNR of this problem.

In our simulation of Fig. \ref{fig.example} (b), we set $\eta=10^{-4}$ and $T=10^{5}$ for both SGD and SAM. Parameter $\rho$ is set as $0.2$ for SAM. For both algorithms, initialization is $\x_0= [0.2, -0.1, 0.3]^\top$ and $\y_0= -3 \x_0$.

\subsection{Scale-Invariance in OP}\label{apdx.sec.op}

Scale-invariance also bothers OP in the same fashion as it burdens NOP. For completeness, the scale-invariance of OP can be verified by
\begin{align}
	f_o(\x^\top \y) = f_o\Big( (\alpha \x)^\top(\frac{1}{\alpha} \y) \Big), \forall \alpha \neq 0.	
\end{align}

An optimizer has to determine $\alpha$ for OP despite it does not influence objective value. Hence, scaling is redundant for OP. 

Similar to NOP, the (stochastic) gradient of OP is not scale-invariant. In particular, given a minibatch of data ${\cal M}$, the stochastic gradient for OP \eqref{eq.prob-op} can be written as
	\begin{align}\label{eq.sto-grad-op}
		\g_\x = \frac{1}{|{\cal M}|} \Big[   \sum_{\xi \in {\cal M}}  (f_o^\xi)' (\x^\top\y )   \Big] \y,	 ~~~~~ \g_\y =  \frac{1}{|{\cal M}|} \Big[    \sum_{\xi \in {\cal M}} (f_o^\xi)' (\x^\top\y )    \Big] \x.
	\end{align}
Consequently, being balance also brings optimization benefits for OP as discussed previously in Section \ref{sec.hard-examples}
.	
	
\subsection{BAR in Detail}\label{apdx.sec.xr-new}

\begin{wrapfigure}{r}{0.4\textwidth}
\vspace{-0.1cm}
	\hspace{-0.5cm}
	\begin{tabular}{c}
		\includegraphics[width=.4\textwidth]{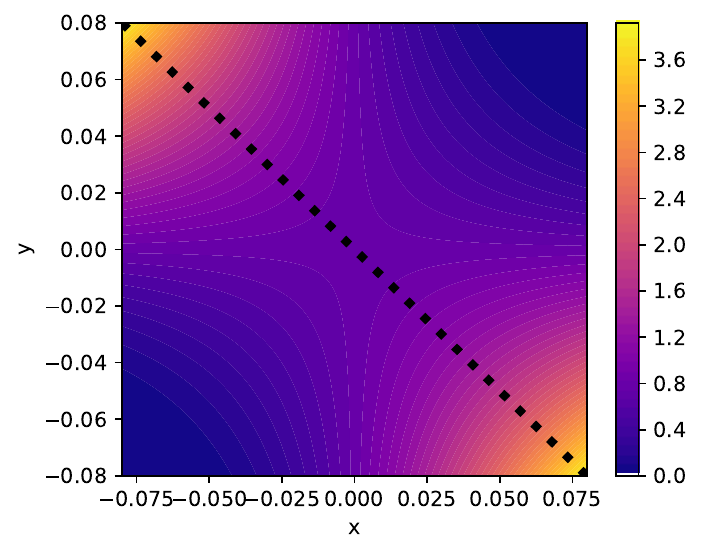}
	\end{tabular}
	\vspace{-0.3cm}
	\caption{The value of $f(x,y)$. Once SGD reaches the dotted line, i.e., the hard constraint $|x|=|y|$, it can only converge to a saddle point $(0, 0)$.}
	\label{fig.symmetry}
\vspace{-0.2cm}
\end{wrapfigure}

BAR is inspired jointly from the balancedness-promoting regularizer $| \| \x_t \|^2 - \| \y_t \|^2 |$ and the dynamics of SAM on both NOP and OP. The implementation of BAR is similar as weight decay in AdamW \citep{loshchilov2017}.

Here we use nBAR as an example.
If ignoring ${\cal A}_t$ in Theorem \ref{thm.nop-sam}, it can be seen that ${\cal B}_t$ for NOP decreases whenever $\|\g_{\x_t}\| < \|\g_{\y_t}\|$.
In other words, the balancedness of SAM is driven by the difference between the gradient norms at $\x_t$ and $\y_t$.
nBAR mimics this and triggers balancedness when stochastic gradients $\g_{\x_t}$ and $\g_{\y_t}$ are not balanced; see Alg. \ref{alg.nxr}.

Finally, we illustrate more on the reasons for employing regularization in OP rather than posing $\| \x_t\| = \| \y_t \|$ as a hard constraint or initializing in a balanced manner, i.e., $\| \x_0 \| = \| \y_0 \|$. 
First, it is quite clear that $\| \x\| = \| \y \|$ is a nonconvex set and how to project on such a set is still debatable. Second, the `symmetry' associated with the scale-invariant problems does not always favor this constraint. 
For the purpose of graphical illustration, we consider a $2$-dimensional example $f(x,y)= 30000(xy - 0.005)^2$. It is quite clear that the objective is symmetric regarding the line $x = -y$, which satisfies $|x| = |y|$; see Fig. \ref{fig.symmetry}. However, it is not hard to see that SGD can never leave $x = -y$ once it reaches this line via a hard constraint or initialized on this line. In other words, directly adding $\| \x \| = \| \y \|$ as a constraint can trap the algorithm at saddle points. This symmetric pattern is even more complicated in high dimension, i.e., symmetry over multiple lines or hyperplanes. Hence, one should be extremely careful about this hard constraint, and regularization is a safer and more practical choice.

\section{Missing Proofs for NOP}

\subsection{Proof of Theorem \ref{thm.nop-sgd}}

\begin{proof}
	For notational convenience, we let $\G_t := \nabla f_t (\x_t \y_t^\top ) $. Then, we have that
	\begin{align*}
		\frac{\text{d} \|\x_t\|^2}{ \text{d} t}	 = 2 \x_t^\top \frac{\td \x_t}{ \td t} = - 2  \x_t^\top \g_{\x_t} = -2 \x_t^\top \G_t \y_t.
	\end{align*}
	Similarly, we have that 
	\begin{align*}
		\frac{\text{d} \|\y_t\|^2}{ \text{d} t}	 = 2 \y_t^\top \frac{\td \y_t}{ \td t} = - 2  \y_t^\top \g_{\y_t} = -2 \y_t^\top \G_t^\top \x_t.
	\end{align*}
	Combining these two inequalities, we arrive at
	\begin{align*}
		\frac{\td \| \x_t\|^2}{ \td t} -  \frac{\td \| \y_t\|^2}{ \td t} = 0	.
	\end{align*}
	The proof is thus completed. 
\end{proof}

\subsection{Extension to Stochastic Normalized Gradient Descent (SNGD)}\label{apdx.sngd-nop}
Next, we extend Theorem \ref{thm.nop-sgd} to SNGD, whose updates can be written as
\begin{align}\label{apdx.eq.sngd}
	\x_{t+1} = \x_t - \eta \frac{\g_{\x_t}}{\sqrt{\| \g_{\x_t} \|^2 + \| \g_{\y_t} \|^2}}, ~~~ & ~~~ \y_{t+1} = \y_t - \eta \frac{\g_{\y_t}}{\sqrt{\| \g_{\x_t} \|^2 + \| \g_{\y_t} \|^2}}.
\end{align}

\begin{theorem}\label{thm.nop-sngd}
	When applying SNGD \eqref{apdx.eq.sngd} on NOP problem \eqref{eq.prob-nop}, the limiting flow with $\eta \rightarrow 0$ guarantees that $\|\mathbf{x}_t\|^2 - \|\mathbf{y}_t\|^2 = \|\mathbf{x}_0\|^2 - \|\mathbf{y}_0\|^2$ for all $t > 0$. In other words, $\frac{\td {\cal B}_t}{\td t} = 0$ holds.
\end{theorem}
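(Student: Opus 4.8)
The plan is to follow the same strategy used in the proof of Theorem~\ref{thm.nop-sgd}: pass to the limiting flow, differentiate $\|\x_t\|^2$ and $\|\y_t\|^2$ in time, and show their difference is identically zero. The only structural change from the plain SGD case is the common normalization factor appearing in both updates of \eqref{apdx.eq.sngd}, so the heart of the argument is to verify that this factor does not disturb the cancellation that yielded the conservation law before.

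Writing $\G_t := \nabla f_t(\x_t\y_t^\top)$ so that $\g_{\x_t} = \G_t\y_t$ and $\g_{\y_t} = \G_t^\top\x_t$ as in \eqref{eq.sto-grad-nop}, and setting the scalar normalization $v_t := \sqrt{\|\g_{\x_t}\|^2 + \|\g_{\y_t}\|^2}$, the limiting flow of \eqref{apdx.eq.sngd} reads $\frac{\td \x_t}{\td t} = -\g_{\x_t}/v_t$ and $\frac{\td \y_t}{\td t} = -\g_{\y_t}/v_t$. First I would compute, via the chain rule,
\begin{align*}
	\frac{\td \|\x_t\|^2}{\td t} = 2\x_t^\top\frac{\td \x_t}{\td t} = -\frac{2}{v_t}\,\x_t^\top\G_t\y_t, \qquad \frac{\td \|\y_t\|^2}{\td t} = 2\y_t^\top\frac{\td \y_t}{\td t} = -\frac{2}{v_t}\,\y_t^\top\G_t^\top\x_t.
\end{align*}
The crucial observation is twofold: the quantity $\x_t^\top\G_t\y_t$ is a scalar and hence equals its own transpose $\y_t^\top\G_t^\top\x_t$; and the same positive factor $1/v_t$ multiplies both derivatives. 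Subtracting the two displays therefore gives $\frac{\td \|\x_t\|^2}{\td t} - \frac{\td \|\y_t\|^2}{\td t} = 0$, so that $\frac{\td {\cal B}_t}{\td t} = 0$ and ${\cal B}_t \equiv {\cal B}_0$ along the entire flow, which is exactly the claim.

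I expect no genuine obstacle here: the argument is essentially identical to Theorem~\ref{thm.nop-sgd}, and the single point worth flagging is that normalization preserves the conserved quantity precisely because it rescales the two gradient fields by a \emph{common} scalar $v_t$ — the invariant $\|\x_t\|^2-\|\y_t\|^2$ is insensitive to any shared reparametrization of the flow's speed. The only mild care needed is to assume $v_t \neq 0$, equivalently $(\g_{\x_t},\g_{\y_t}) \neq 0$, so that the flow is well defined; this is already covered by the standing assumption that the iterates do not diverge together with the well-definedness discussion borrowed from \citep{wen2023}.
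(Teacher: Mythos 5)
Your proposal is correct and follows essentially the same route as the paper's proof: differentiate $\|\x_t\|^2$ and $\|\y_t\|^2$ along the limiting flow, note that $\x_t^\top\G_t\y_t$ equals its transpose $\y_t^\top\G_t^\top\x_t$, and observe that the common normalization factor cancels in the difference. The remark about requiring the normalizer to be nonzero matches the paper's standing well-definedness assumption.
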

\begin{proof}
For notational convenience, we let $\G_t := \nabla f_t (\x_t \y_t^\top ) $. Then, we have that
	\begin{align*}
		\frac{\text{d} \|\x_t\|^2}{ \text{d} t}	 = 2 \x_t^\top \frac{\td \x_t}{ \td t} = - 2  \frac{\x_t^\top \g_{\x_t}}{\sqrt{\| \g_{\x_t} \|^2 + \| \g_{\y_t} \|^2}} = -2 \frac{\x_t^\top \G_t \y_t}{\sqrt{\| \g_{\x_t} \|^2 + \| \g_{\y_t} \|^2}}.
	\end{align*}
	Similarly, we have that 
	\begin{align*}
		\frac{\text{d} \|\y_t\|^2}{ \text{d} t}	 = 2 \y_t^\top \frac{\td \y_t}{ \td t} = - 2  \frac{\y_t^\top \g_{\y_t}}{\sqrt{\| \g_{\x_t} \|^2 + \| \g_{\y_t} \|^2}} = -2 \frac{\y_t^\top \G_t^\top \x_t}{\sqrt{\| \g_{\x_t} \|^2 + \| \g_{\y_t} \|^2}}.
	\end{align*}
	Combining these two inequalities, we arrive at
	\begin{align*}
		\frac{\td \| \x_t\|^2}{ \td t} -  \frac{\td \| \y_t\|^2}{ \td t} = 0	.
	\end{align*}
	The proof is thus completed. 
\end{proof}

\subsection{Proof of Theorem \ref{thm.nop-sam}}

\begin{proof}
	Denote $\G_t = \nabla f_t (\x_t \y_t^\top ) $ and $\tilde{\G}_t = \nabla f_t (\tilde{\x}_t\tilde{\y}_t^\top ) $ for notational convenience. Following SAM updates in \eqref{eq.sam-nop} and setting $\eta \rightarrow 0$, we have that
	\begin{align*}
		\frac{\td \x_t}{\td t} = - \tilde{\G}_t( \y_t + \rho u_t \G_t^\top \x_t), ~~~ \frac{\td \y_t}{\td t} = - \tilde{\G}_t^\top( \x_t + \rho u_t \G_t \y_t).
	\end{align*}
	This gives that 
	\begin{subequations}
	\begin{align}
		\frac{1}{2}\frac{\td \big( \|\x_t\|^2 - \|\y_t\|^2\big)}{\td t} & = \rho u_t \bigg[ \y_t^\top \tilde{\G}_t^\top \G_t \y_t - \x_t^\top \tilde{\G}_t \G_t^\top \x_t \bigg]  \\
		& = \rho u_t \bigg[ \| \g_{\x_t} \|^2 - \| \g_{\y_t} \|^2 \bigg] + \underbrace{ \rho u_t \bigg[ \y_t^\top (\tilde{\G}_t - \G_t)^\top \g_{\x_t} - \x_t^\top (\tilde{\G}_t -\G_t)\g_{\y_t} \bigg]}_{:= {\cal A}_t}.  \label{eq.apdx.sam-nop-grad} 
	\end{align}
	\end{subequations}
	
	The second term in \eqref{eq.apdx.sam-nop-grad} is ${\cal A}_t$ in Theorem \ref{thm.nop-sam}. Next, we give upper bound on $|{\cal A}_t|$. Using Assumption \ref{as.smooth-nop}, we have that
	\begin{align*}
		\| \tilde{\G}_t - \G_t \| & \leq L \| \tilde{\x}_t\tilde{\y}_t^\top - \x_t\y_t^\top   \| \\
		& =  L \|\rho u_t (\x_t  \g_{\y_t}^\top + \g_{\x_t} \y_t^\top )  + \rho^2 u_t^2 \g_{\x_t} \g_{\y_t}^\top \| \nonumber \\
		& \stackrel{(a)}{\leq}  L \rho  \frac{\| \x_t \g_{\y_t}^\top + \g_{\x_t} \y_t^\top \|}{\sqrt{\| \g_{\x_t}\|^2 + \|  \g_{\y_t}\|^2}} + L \rho^2 \frac{ \| \g_{\x_t} \g_{\y_t}^\top \|}{\| \g_{\x_t}\|^2 + \|  \g_{\y_t}\|^2} \\
		& \stackrel{(b)}{\leq} L\rho (\| \x_t \| + \| \y_t \|) + \frac{L\rho^2}{2} = {\cal O}(L \rho)
	\end{align*}
	where (a) uses the definition of $u_t$; (b) follows from $\| \mathbf{a}\mathbf{b}^\top \| = \| \mathbf{a} \| \| \mathbf{b} \| $ and the finite convergence assumption. To bound ${\cal A}_t$, we also have
	\begin{align}\label{eq.apdx.sam-nop-2}
		 \rho u_t \big| \y_t^\top (\tilde{\G}_t - \G_t)^\top \g_{\x_t} \big| & = \rho \frac{| \y_t^\top (\tilde{\G}_t - \G_t)^\top \g_{\x_t} |}{\sqrt{\| \g_{\x_t} \|^2 + \| \g_{\y_t}  \|^2 }}  \leq \rho \frac{| \y_t^\top (\tilde{\G}_t - \G_t)^\top \g_{\x_t} |}{\| \g_{\x_t} \|  } \nonumber \\
		 & \leq \rho \| \tilde{\G}_t - \G_t \| \|\y_t \|= {\cal O}(L \rho^2)
	\end{align}
	where the last line also uses the finite convergence.
	We can bound $\rho u_t| \x_t^\top (\tilde{\G}_t -\G_t)\g_{\y_t}| = {\cal O}(\rho^2 L) $ in a similar manner. Combining \eqref{eq.apdx.sam-nop-2} with \eqref{eq.apdx.sam-nop-grad} gives the bound on $|{\cal A}_t| = {\cal O}(\rho^2 L)$	.
\end{proof}

\subsection{Proof of Corollary \ref{thm.sam-nop-balancing}}

Here, we prove the formal version of Corollary \ref{thm.sam-nop-balancing}. 
\begin{corollary}
	Suppose that $\| \g_{\x_t} \| > 0 $ and $\| \g_{\y_t} \| > 0 $ and $\rho \rightarrow 0$, then there exists $\bar{\cal B}_t$ such that the magnitude of ${\cal B}_t$ shrinks whenever $|{\cal B}_t| > \bar{\cal B}_t$. 
\end{corollary}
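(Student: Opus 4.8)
The plan is to start directly from the dynamics established in Theorem~\ref{thm.nop-sam}, namely $\frac{\td {\cal B}_t}{\td t} = \rho \frac{\|\g_{\x_t}\|^2 - \|\g_{\y_t}\|^2}{\sqrt{\|\g_{\x_t}\|^2 + \|\g_{\y_t}\|^2}} + {\cal A}_t$ with $|{\cal A}_t| = {\cal O}(\rho^2 L)$, and to translate the statement ``the magnitude of ${\cal B}_t$ shrinks'' into the sign condition $\sgn({\cal B}_t)\frac{\td {\cal B}_t}{\td t} < 0$. The only nontrivial content is to expose a restoring structure hidden inside the gradient gap $\|\g_{\x_t}\|^2 - \|\g_{\y_t}\|^2$: I would show that it contains a term proportional to $-{\cal B}_t$ that dominates once $|{\cal B}_t|$ is large enough.

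Concretely, write $\G_t = \nabla f_t(\x_t\y_t^\top)$ so that $\g_{\x_t} = \G_t\y_t$ and $\g_{\y_t} = \G_t^\top\x_t$. First I would note that the hypotheses $\|\g_{\x_t}\| > 0$ and $\|\g_{\y_t}\| > 0$ force $\|\x_t\| > 0$ and $\|\y_t\| > 0$ (otherwise one of the two gradients vanishes), so the Rayleigh-type quotients $\mu_t := \|\g_{\y_t}\|^2 / \|\x_t\|^2$ and $\nu_t := \|\g_{\x_t}\|^2 / \|\y_t\|^2$ are well defined and strictly positive. Setting $s_t := \tfrac{1}{2}(\|\x_t\|^2 + \|\y_t\|^2)$ and using $\|\x_t\|^2 = s_t + {\cal B}_t$, $\|\y_t\|^2 = s_t - {\cal B}_t$, a one-line computation gives the key identity
\begin{align*}
\|\g_{\x_t}\|^2 - \|\g_{\y_t}\|^2 = \nu_t \|\y_t\|^2 - \mu_t \|\x_t\|^2 = (\nu_t - \mu_t)\, s_t - (\nu_t + \mu_t)\, {\cal B}_t .
\end{align*}
The strictly positive coefficient $\nu_t + \mu_t$ multiplying $-{\cal B}_t$ is exactly the restoring force that drives balancedness.

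Substituting this identity into the dynamics, the shrinkage condition $\sgn({\cal B}_t)\frac{\td {\cal B}_t}{\td t} < 0$ becomes, after bounding $\sgn({\cal B}_t){\cal A}_t \le |{\cal A}_t|$ and clearing the common positive factor $\rho/\sqrt{\|\g_{\x_t}\|^2 + \|\g_{\y_t}\|^2}$, the requirement $(\nu_t + \mu_t)|{\cal B}_t| > \sgn({\cal B}_t)(\nu_t - \mu_t)s_t + \frac{\sqrt{\|\g_{\x_t}\|^2+\|\g_{\y_t}\|^2}}{\rho}|{\cal A}_t|$. Upper bounding $\sgn({\cal B}_t)(\nu_t-\mu_t)s_t \le |\nu_t-\mu_t|s_t$ handles both signs of ${\cal B}_t$ uniformly and yields the explicit, manifestly nonnegative threshold
\begin{align*}
\bar{\cal B}_t = \frac{|\nu_t - \mu_t|\, s_t}{\nu_t + \mu_t} + \frac{\sqrt{\|\g_{\x_t}\|^2 + \|\g_{\y_t}\|^2}\,|{\cal A}_t|}{\rho\,(\nu_t + \mu_t)} ,
\end{align*}
so that whenever $|{\cal B}_t| > \bar{\cal B}_t$ the dominant $-(\nu_t+\mu_t){\cal B}_t$ term forces the correct sign and $|{\cal B}_t|$ decreases.

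I expect the main obstacle to be bookkeeping rather than conceptual: confirming that $\bar{\cal B}_t$ is well behaved as $\rho \to 0$. Since $|{\cal A}_t| = {\cal O}(\rho^2 L)$, the second summand is ${\cal O}(\rho L)$ and vanishes, collapsing the threshold to the purely geometric quantity $|\nu_t - \mu_t| s_t / (\nu_t + \mu_t)$. The delicate point is that this requires $\mu_t$ and $\nu_t$ to stay bounded away from $0$, which is precisely where the gradient-positivity hypothesis $\|\g_{\x_t}\|,\|\g_{\y_t}\| > 0$ is used (pointwise in $t$), and one must also ensure the division step is legitimate, i.e. $\rho$ is small enough that the $\mathcal{A}_t$-slack does not overwhelm the restoring term.
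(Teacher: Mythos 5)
Your proof is correct, and it reaches the conclusion by a genuinely different route than the paper. The paper's proof rescales $(\x_t,\y_t)$ to a balanced representative $(\bar\x_t,\bar\y_t)$ with $\|\bar\x_t\|=\|\bar\y_t\|$ and $\bar\x_t\bar\y_t^\top=\x_t\y_t^\top$, expresses the shrinkage condition as positivity of $h(z)=(az-b/z)/\sqrt{az+b/z}$ in the scaling variable $z=\alpha_t^2$ (with $a=\|\G_t^\top\bar\x_t\|^2$, $b=\|\G_t\bar\y_t\|^2$), and uses monotonicity of $h$ to locate the critical scaling $\bar\alpha^2=\sqrt{b/a}$, which is then translated into the threshold \eqref{apdx.eq.bbar}. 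You instead keep $(\x_t,\y_t)$ unscaled and expose the restoring force directly through the identity $\|\g_{\x_t}\|^2-\|\g_{\y_t}\|^2=(\nu_t-\mu_t)s_t-(\nu_t+\mu_t){\cal B}_t$, with $\mu_t,\nu_t$ scale-invariant Rayleigh-type quotients; this is shorter and dispenses with both the auxiliary rescaled pair and the monotonicity computation. One caveat you should make explicit to close the argument: your $\bar{\cal B}_t$ contains $s_t=\tfrac12(\|\x_t\|^2+\|\y_t\|^2)\ge|{\cal B}_t|$, so the condition $|{\cal B}_t|>\bar{\cal B}_t$ is implicit in ${\cal B}_t$ and could a priori be vacuous (a threshold like $|{\cal B}_t|+1$ would also make the implication true but empty). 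It is not vacuous, precisely because the coefficient satisfies $|\nu_t-\mu_t|/(\nu_t+\mu_t)<1$ under your positivity hypotheses: using $s_t^2={\cal B}_t^2+\|\x_t\|^2\|\y_t\|^2$ and solving the resulting quadratic inequality, your condition in the $\rho\to0$ limit is equivalent to the explicit condition $|{\cal B}_t|>\frac{|\nu_t-\mu_t|}{2\sqrt{\mu_t\nu_t}}\,\|\x_t\|\|\y_t\|$, which is exactly the paper's threshold \eqref{apdx.eq.bbar} rewritten in your notation (after symmetrizing over the sign of ${\cal B}_t$). So the two arguments certify the same threshold region; the paper's form has the advantage of being explicit and manifestly determined by scale-invariant quantities, while yours buys a more elementary derivation at the cost of this one extra de-referencing step.
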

\begin{proof}
	
    Without loss of generality, we suppose that ${\cal B}_t > 0$, i.e., $\|\x_t\| > \|\y_t\|>0$. Let $\bar{\x}_t$ and $\bar{\y}_t$ be the scaled version of $\x_t$ and $\y_t$ such that $\| \bar{\x}_t \| = \|\bar{\y}_t\|$ and $\bar{\x}_t \bar{\y}_t^\top = \x_t \y_t^\top$ are satisfied. This suggests that $\x_t=\alpha_t\bar \x_t$ and $\y_t=\bar\y_t/\alpha_t$, where $\alpha_t=\sqrt{\|\x_t\|/\|\y_t\|}$. Next, we show that whenever ${\cal B}_t$ is large enough, we have that
    \begin{align}
        \frac{\td {\cal B}_t }{\td t} = \rho \frac{\| \g_{\x_t} \|^2 -  \| \g_{\y_t} \|^2}{\sqrt{\| \g_{\x_t}\|^2 + \|  \g_{\y_t}\|^2}}  + {\cal O}(\rho^2 L) < 0.
    \end{align}
    Since $\rho \rightarrow 0$, we only need to show that for some small $\epsilon={\cal O}(\rho L) \geq 0$,
    \begin{align}\label{apdx.eq.requirement}
         \frac{\| \g_{\x_t} \|^2 -  \| \g_{\y_t} \|^2}{\sqrt{\| \g_{\x_t}\|^2 + \|  \g_{\y_t}\|^2}}  < - \epsilon .
    \end{align}
    By the definition of $\g_{\x_t}, \g_{\y_t}$ and $\bar\x_t, \bar\y_t$, we have that \eqref{apdx.eq.requirement} can be rewritten as
    \begin{align}\label{apdx.eq.key1}
         \frac{ \alpha_t^2 \| \G_t^\top \bar{\x}_t \|^2 - \| \G_t \bar{\y}_t \|^2 / \alpha_t^2 }{ \sqrt{\alpha_t^2 \| \G_t^\top \bar{\x}_t \|^2 + \| \G_t \bar{\y}_t \|^2 / \alpha_t^2 }}  > \epsilon .
    \end{align}
    Note that the function $h(z):=(az - b/z)/\sqrt{az + b/z}$ is monotonically increasing in $z$ when $a,b>0$ and $z>0$ as $h'(z)=(a^2z + 6ab/z + b^2/z^3)/(2(az+b/z)^{3/2})>0$. This implies that $h(z)>0$ when $z>\sqrt{b/a}$, and thus the condition in \eqref{apdx.eq.key1} can be satisfied for $\epsilon={\cal O}(\rho L)\rightarrow0$ when $\alpha_t^2>\bar\alpha^2$, where $\bar\alpha^2:=\| \G_t \bar{\y}_t \| / \| \G_t^\top \bar{\x}_t \|$. This condition on $\alpha_t$ is equivalent to
    \begin{align*}
        {\cal B}_t
        & = \frac{1}{2}\big(\|\x_t\|^2 - \|\y_t\|^2\big) \\
        & = \frac{1}{2}\big(\|\alpha_t\bar\x_t\|^2 - \|\bar\y_t/\alpha_t\|^2\big) \\
        & > \frac{1}{2} \big( \|\bar{\alpha} \bar{\x}_t \|^	2 - \|  \bar{\y}_t / \bar{\alpha} \|^	2 \big).
    \end{align*}
    Combining everything together, we have that $\frac{\td {\cal B}_t }{\td t}<0$ if
    \begin{align}\label{apdx.eq.bbar}
        {\cal B}_t > \bar{\cal B}_t := \frac{1}{2} \big( \| \bar{\alpha} \bar{\x}_t \|^	2 - \|  \bar{\y}_t / \bar{\alpha} \|^	2 \big).
    \end{align}
    The proof is thus completed. We also note that in the case of $\rho>0$, the same condition as \eqref{apdx.eq.bbar} can be derived by obtaining the inverse function of $h(z)$ evaluated at $\epsilon={\cal O}(\rho L)$, and the corresponding $\bar\alpha_\rho$ and $\bar{\cal B}_t^\rho$ can be defined similarly.
\end{proof}

\subsection{Extension to LoRA (layer-wise NOP problem) }

Let $l \in \{ 1, 2, \ldots, D\}$ be the layer index. Denote $f_t$ as the loss function on minibatch ${\cal M}_t$. To simplify the notation, we also let $\G_{t,l}:= \nabla_{\x_{t,l}\y_{t,l}^{\top}} f_t( \{ \x_{t,l}, \y_{t,l} \}_l)$, $\tilde{\G}_{t,l}:= \nabla_{\tilde{\x}_{t,l}\tilde{\y}_{t,l}^{\top}} f_t( \{ \tilde{\x}_{t,l}, \tilde{\y}_{t,l} \}_l)$, and $u_t := 1 / \sqrt{ \sum_{l=1}^D \big( \| \g_{\x_{t,l}}\|^2 + \|  \g_{\y_{t,l}}\|^2 \big)}$. The update of SAM for layer $l$ can be written as

\begin{subequations}\label{eq.sam-nop-lora}
\begin{align}
	\tilde{\x}_{t,l} = \x_{t,l}+  \rho u_t \G_{t,l}\y_{t,l}, &~~~~~\tilde{\y}_{t,l} = \y_{t,l} +  \rho u_t \G_{t,l}^\top \x_{t,l}	\\
	\g_{\tilde{\x}_{t,l}} =  \tilde{\G}_{t,l} \tilde{\y}_{t,l},	 &~~~~ \g_{\tilde{\y}_{t,l}} =  \tilde{\G}_{t,l}^\top \tilde{\x}_{t,l} \\
	\x_{t+1, l} = \x_{t,l} - \eta \g_{\tilde{\x}_{t,l}}, &~~~~ \y_{t+1, l} = \y_{t,l} - \eta \g_{\tilde{\y}_{t,l}}.
\end{align}
\end{subequations}

\textbf{Refined assumption for LoRA.} Direct translating Assumption \ref{as.smooth-nop} to our multi-layer setting gives
	\begin{align}
		\| \nabla f_t (\{ \x_l \y_l^\top \}_l) - \nabla f_t (\{ \mathbf{a}_l \mathbf{b}_l^\top \}_l)  \|^2 \leq L^2 \sum_{l=1}^D \|\x_l \y_l^\top - \mathbf{a}_l \mathbf{b}_l^\top \|^2.	
	\end{align}
	However, the above assumption is loose, and our proof only needs block-wise smoothness, i.e.,
	\begin{align}\label{eq.apdx.lora-assumption}
		\| \nabla_l f_t ( \x_l \y_l^\top ) - \nabla_l f_t (\mathbf{a}_l \mathbf{b}_l^\top)  \|^2 \leq \hat{L}^2 \|\x_l \y_l^\top - \mathbf{a}_l \mathbf{b}_l^\top \|^2, \forall l
	\end{align}
	where $\nabla_l$ refers to the gradient on $\x_l \y_l^\top$. It can be seen that $\sqrt{D}\hat{L} \geq L$, but one can assume that $\sqrt{D}\hat{L} \approx L$ for intuitive understandings.

\begin{theorem}\label{thm.sam-nop-lora}
	Suppose that block smoothness assumption in \eqref{eq.apdx.lora-assumption} holds. Consider the limiting flow of SAM in \eqref{eq.sam-nop-lora} with $\eta \rightarrow 0$ and a sufficiently small $\rho$. Let ${\cal B}_{t,l} := \frac{1}{2} \big( \| \x_{t,l} \|^2 - \| \y_{t,l}\|^2 \big)$ and ${\cal B}_t = \sum_{l=1}^D {\cal B}_{t,l}$. For some $|{\cal A}_t|= {\cal O}(\rho^2 \hat{L})$, SAM guarantees that
	\begin{align}\label{apdx.sam-lora-overall}
		 \frac{\td {\cal B}_t }{\td t} =  \rho  \frac{ \sum_{l=1}^D \| \g_{\x_{t,l}} \|^2 -  \sum_{l=1}^D\| \g_{\y_{t,l}} \|^2}{\sqrt{ \sum_{l=1}^D \| \g_{\x_{t,l}}\|^2 + \sum_{l=1}^D\|  \g_{\y_{t,l}}\|^2}}  + {\cal A}_t.
	\end{align}
	Furthermore, for per layer balancedness it satisfies that for some $|{\cal A}_{t,l}|= {\cal O}(\rho^2 \hat{L})$.
	\begin{align}\label{apdx.sam-lora-per-layer}
		 \frac{\td {\cal B}_{t,l} }{\td t} =  \rho  \frac{  \| \g_{\x_{t,l}} \|^2 -  \| \g_{\y_{t,l}} \|^2}{\sqrt{ \sum_{l=1}^D \| \g_{\x_{t,l}}\|^2 + \sum_{l=1}^D\|  \g_{\y_{t,l}}\|^2}}  + {\cal A}_{t,i}.
	\end{align}
\end{theorem}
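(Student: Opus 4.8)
The plan is to replay the proof of Theorem \ref{thm.nop-sam} one block at a time, exploiting the fact that the global normalizer $u_t$ is the only quantity coupling the layers. First I would send $\eta \to 0$ in \eqref{eq.sam-nop-lora} to obtain, for each $l$, the limiting flow
\[
\frac{\td \x_{t,l}}{\td t} = -\tilde{\G}_{t,l}\big(\y_{t,l} + \rho u_t \G_{t,l}^\top \x_{t,l}\big), \qquad \frac{\td \y_{t,l}}{\td t} = -\tilde{\G}_{t,l}^\top\big(\x_{t,l} + \rho u_t \G_{t,l}\y_{t,l}\big).
\]
Differentiating ${\cal B}_{t,l} = \frac{1}{2}(\|\x_{t,l}\|^2 - \|\y_{t,l}\|^2)$ and substituting these flows, the zeroth-order pieces $-\x_{t,l}^\top \tilde{\G}_{t,l}\y_{t,l}$ and $+\y_{t,l}^\top \tilde{\G}_{t,l}^\top \x_{t,l}$ are scalars that are transposes of one another and cancel exactly. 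This is the layer-wise analog of the cancellation in the single-variable proof, and it leaves only the $\rho$-order term $\rho u_t\big(\y_{t,l}^\top \tilde{\G}_{t,l}^\top \G_{t,l}\y_{t,l} - \x_{t,l}^\top \tilde{\G}_{t,l}\G_{t,l}^\top \x_{t,l}\big)$.

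Next I would split $\tilde{\G}_{t,l} = \G_{t,l} + (\tilde{\G}_{t,l} - \G_{t,l})$. Using $\g_{\x_{t,l}} = \G_{t,l}\y_{t,l}$ and $\g_{\y_{t,l}} = \G_{t,l}^\top \x_{t,l}$, the $\G_{t,l}$ part produces the main term $\rho u_t(\|\g_{\x_{t,l}}\|^2 - \|\g_{\y_{t,l}}\|^2)$, while the remainder is exactly ${\cal A}_{t,l} := \rho u_t\big[\y_{t,l}^\top(\tilde{\G}_{t,l} - \G_{t,l})^\top \g_{\x_{t,l}} - \x_{t,l}^\top(\tilde{\G}_{t,l} - \G_{t,l})\g_{\y_{t,l}}\big]$. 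Plugging in $u_t = 1/\sqrt{\sum_{l'}(\|\g_{\x_{t,l'}}\|^2 + \|\g_{\y_{t,l'}}\|^2)}$ yields the per-layer identity \eqref{apdx.sam-lora-per-layer} directly. The aggregate statement \eqref{apdx.sam-lora-overall} then follows by summing over $l$, since ${\cal B}_t = \sum_l {\cal B}_{t,l}$, the limiting flow is additive across blocks, and the numerators collect into $\sum_l \|\g_{\x_{t,l}}\|^2 - \sum_l \|\g_{\y_{t,l}}\|^2$ over the common denominator.

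The one genuinely new difficulty is controlling ${\cal A}_{t,l}$ and ${\cal A}_t$. The per-block product perturbation is benign: since $\rho u_t \|\g_{\x_{t,l}}\| \leq \rho$ and $\rho u_t \|\g_{\y_{t,l}}\| \leq \rho$ because the global denominator dominates any single block, expanding $\tilde{\x}_{t,l}\tilde{\y}_{t,l}^\top - \x_{t,l}\y_{t,l}^\top$ gives $\|\tilde{\x}_{t,l}\tilde{\y}_{t,l}^\top - \x_{t,l}\y_{t,l}^\top\| = {\cal O}(\rho)$ under the finite-trajectory assumption. The subtlety is that $\tilde{\G}_{t,l} - \G_{t,l}$ measures the change in the $l$-th gradient block caused by perturbing \emph{all} blocks simultaneously, so a single-block Lipschitz bound does not suffice; this is precisely where the block-smoothness assumption \eqref{eq.apdx.lora-assumption} enters, giving $\|\tilde{\G}_{t,l} - \G_{t,l}\| = {\cal O}(\hat{L}\rho)$ and hence $|{\cal A}_{t,l}| = {\cal O}(\hat{L}\rho^2)$ after bounding $u_t/\|\g_{\x_{t,l}}\|$ and $u_t/\|\g_{\y_{t,l}}\|$. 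For the aggregate error I expect a naive term-by-term summation to lose a factor of $D$, so the cleaner route is to keep $u_t$ inside the sum and apply Cauchy--Schwarz across layers, bounding $\sum_l \|\g_{\x_{t,l}}\|$ against the denominator; this trades the stray layer count for a $\sqrt{D}$ that is reconciled through $\sqrt{D}\hat{L} \approx L$. I anticipate this aggregation step -- matching the error order to the stated ${\cal O}(\rho^2 \hat{L})$ while honestly accounting for the $\sqrt{D}$ inflation -- to be the main obstacle, and it is also what underlies the worst-case $\sqrt{D}$ slowdown flagged in the main text.
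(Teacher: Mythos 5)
Your derivation of the layer-wise dynamics and the per-layer bound \eqref{apdx.sam-lora-per-layer} matches the paper's proof exactly: the same cancellation of the zeroth-order terms, the same splitting $\tilde{\G}_{t,l} = \G_{t,l} + (\tilde{\G}_{t,l} - \G_{t,l})$, the same definition of ${\cal A}_{t,l}$, and your observation that $u_t \|\g_{\x_{t,l}}\| \leq 1$ together with $\|\tilde{\G}_{t,l} - \G_{t,l}\| = {\cal O}(\hat{L}\rho)$ does yield $|{\cal A}_{t,l}| = {\cal O}(\rho^2\hat{L})$ for each fixed $l$.

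The gap is in the aggregate bound $|{\cal A}_t| = {\cal O}(\rho^2\hat{L})$. Your route --- coarsen each block to $\|\tilde{\G}_{t,l} - \G_{t,l}\| = {\cal O}(\hat{L}\rho)$ and then control the remaining single power of gradient norms, $u_t\sum_l \|\g_{\x_{t,l}}\|$, by Cauchy--Schwarz across layers --- delivers $|{\cal A}_t| = {\cal O}(\rho^2\hat{L}\sqrt{D})$, not the claimed ${\cal O}(\rho^2\hat{L})$, and appealing to $\sqrt{D}\hat{L}\approx L$ cannot close this: the theorem's bound is stated in terms of $\hat{L}$ itself, and that relation is explicitly only a heuristic in the paper (the rigorous inequality is $\sqrt{D}\hat{L}\geq L$, which points the wrong way for your purpose). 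The $\sqrt{D}$ you flag is an artifact of discarding the gradient-norm structure too early, not a genuine feature of ${\cal A}_t$; in the paper it is the \emph{main} term of the per-layer dynamics that shrinks by $\sqrt{D}$, while the error stays dimension-free. The fix is to keep the sharper block estimate $\|\tilde{\G}_{t,l} - \G_{t,l}\| = {\cal O}\big(\hat{L}\rho\, u_t(\|\g_{\x_{t,l}}\| + \|\g_{\y_{t,l}}\|)\big)$ (which follows from expanding $\tilde{\x}_{t,l}\tilde{\y}_{t,l}^\top - \x_{t,l}\y_{t,l}^\top$ and bounding $\|\x_{t,l}\|, \|\y_{t,l}\|$ by the finite-trajectory assumption), so that each summand is quadratic in the per-layer gradient norms:
\begin{align*}
|{\cal A}_{t,l}| = {\cal O}(\rho^2\hat{L})\, u_t^2\big(\|\g_{\x_{t,l}}\|^2 + \|\g_{\x_{t,l}}\|\|\g_{\y_{t,l}}\| + \|\g_{\y_{t,l}}\|^2\big).
\end{align*}
Summing over $l$ and using $\|\g_{\x_{t,l}}\|\|\g_{\y_{t,l}}\| \leq \frac{1}{2}(\|\g_{\x_{t,l}}\|^2 + \|\g_{\y_{t,l}}\|^2)$, the sum is absorbed exactly by $u_t^{-2} = \sum_{l}(\|\g_{\x_{t,l}}\|^2 + \|\g_{\y_{t,l}}\|^2)$, giving $|{\cal A}_t| = {\cal O}(\rho^2\hat{L})$ with no dependence on $D$ and no Cauchy--Schwarz across layers. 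This term-by-term absorption against the normalizer is precisely how the paper's proof proceeds.
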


\textbf{Understanding Theorem \ref{thm.sam-nop-lora}.} $ {\cal A}_{t,i}$ and ${\cal A}_t$ are at the same order because of the possible unbalancedness among gradient norms for different layers.  Comparing per layer balancedness ${\cal B}_{t,l}$ with Theorem \ref{thm.nop-sam}, it can be roughly estimate that the regularization power is ${\cal O}(\sqrt{D})$ times smaller in ${\cal B}_{t,l}$. This estimation comes from $\hat{L} \approx  L/\sqrt{D} $, and the first term is also ${\cal O}(\sqrt{D})$ smaller than the same term in Theorem \ref{thm.nop-sam}. In other words, the regularization on balancedness can be reduced by ${\cal O}(\sqrt{D})$ times in LoRA in the worst case, and the worst case comes from gradient unbalancedness among layers.

\begin{proof}
Following \eqref{eq.sam-nop-lora} and setting $\eta \rightarrow 0$, we have that
	\begin{align*}
		\frac{\td \x_{t,l}}{\td t} = - \tilde{\G}_{t,l}( \y_{t,l} + \rho u_t \G_{t,l}^\top \x_{t,l}), ~~~ \frac{\td \y_{t,l}}{\td t} = - \tilde{\G}_{t,l}^\top( \x_{t,l} + \rho u_t \G_{t,l} \y_{t,l}).
	\end{align*}
	This gives that 
	\begin{subequations}
	\begin{align}
		\frac{\td {\cal B}_{t,l}}{\td t} & = \rho u_t \bigg[ \y_{t,l}^\top \tilde{\G}_{t,l}^\top \G_{t,l} \y_{t,l} - \x_{t,l}^\top \tilde{\G}_{t,l} \G_{t,l}^\top \x_{t,l} \bigg] \label{eq.apdx.sam-nop-param} \\
		& = \rho u_t \bigg[ \| \g_{\x_{t,l}} \|^2 - \| \g_{\y_{t,l}} \|^2 \bigg] + \underbrace{ \rho u_t \bigg[ \y_{t,l}^\top (\tilde{\G}_{t,l} - \G_{t,l})^\top \g_{\x_{t,l}} - \x_{t,l}^\top (\tilde{\G}_{t,l} -\G_{t,l})\g_{\y_{t,l}} \bigg]}_{:= {\cal A}_{t,l}}.  	\end{align}
	\end{subequations}

	\textbf{Proof for \eqref{apdx.sam-lora-overall}.} Let ${\cal A}_t:= \sum_l {\cal A}_{t,l}$. To start with, we have that
	\begin{align*}
		\| \tilde{\G}_{t,l} - \G_{t,l} \| & \leq \hat{L} \| \tilde{\x}_{t,l}\tilde{\y}_{t,l}^\top - \x_{t,l}\y_{t,l}^\top   \| \\
		& =  \hat{L} \|\rho u_t (\x_{t,l}  \g_{\y_{t,l}}^\top + \g_{\x_{t,l}} \y_{t,l}^\top )  + \rho^2 u_t^2 \g_{\x_{t,l}} \g_{\y_{t,l}}^\top \| \nonumber 
	\end{align*}
	Next, based on finite convergence assumption, we have that
	\begin{align}\label{apdx.eq.sam-lora-sum_diff}
		 & ~~~~~ \rho u_t \sum_{l=1}^D \big| \y_{t,l}^\top (\tilde{\G}_{t,l} - \G_{t,l})^\top \g_{\x_{t,l}} \big|  \\
		 & \leq \sum_{l=1}^D {\cal O} \bigg( \rho u_t \|\tilde{\G}_{t,l} - \G_{t,l} \| \cdot \|\g_{\x_{t,l}} \| \bigg)  \nonumber \\
		 & \stackrel{(a)}{\leq}  \sum_{l=1}^D {\cal O} \bigg( \rho^2 u_t^2 \hat{L}  \|\x_{t,l}  \g_{\y_{t,l}}^\top + \g_{\x_{t,l}} \y_{t,l}^\top \| \cdot \|\g_{\x_{t,l}} \| \bigg) \nonumber \\
		 & \stackrel{(b)}{\leq} \sum_{l=1}^D {\cal O} \bigg( \rho^2 u_t^2 \hat{L}  ( \| \g_{\y_{t,l}}\| + \| \g_{\x_{t,l}}  \| ) \cdot \|\g_{\x_{t,l}} \| \bigg) \nonumber \\
		 & =  \rho^2 \hat{L} \cdot {\cal O} \bigg(  \frac{\sum_{l=1}^D \| \g_{\x_{t,l}}  \|^2 }{ \sum_{l=1}^D (\| \g_{\x_{t,l}}\|^2 + \| \g_{\y_{t,l}}\|^2 ) } + \frac{\sum_{l=1}^D \| \g_{\x_{t,l}}  \|\| \g_{\y_{t,l}}  \| }{ \sum_{l=1}^D (\| \g_{\x_{t,l}}\|^2 + \| \g_{\y_{t,l}}\|^2  ) }  \bigg)  \nonumber \\
		 & = {\cal O}( \rho^2 \hat{L}) \nonumber
	\end{align}
	where in (a) we use the fact that $\rho$ is chosen small; (b) uses finite convergence assumption and $\| \mathbf{a}\mathbf{b}^\top \| = \| \mathbf{a} \| \| \mathbf{b} \| $. Using similar arguments, we can bound ${\cal A}_t = {\cal O}(\rho^2 \hat L)$.
	
	\textbf{Proof for \eqref{apdx.sam-lora-per-layer}.}
	Next, we give upper bound on $|{\cal A}_{t,l}|$. Using similar argument as \eqref{apdx.eq.sam-lora-sum_diff}, we have that
	\begin{align}\label{apdx.eq.sam-lora_diff}
		 &~~~~ \rho u_t \big| \y_{t,l}^\top (\tilde{\G}_{t,l} - \G_{t,l})^\top \g_{\x_{t,l}} \big| \\
		 & \leq {\cal O} \bigg( \rho^2 u_t^2 \hat{L} ( \| \g_{\y_{t,l}}\| + \| \g_{\x_{t,l}}  \| ) \cdot \|\g_{\x_{t,l}} \| \bigg)  \nonumber \\
		 & =  \rho^2 \hat{L} \cdot {\cal O} \bigg(  \frac{ \| \g_{\x_{t,l}}  \|^2 }{ \sum_{l=1}^D (\| \g_{\x_{t,l}}\|^2 + \| \g_{\y_{t,l}}\|^2 ) } + \frac{ \| \g_{\x_{t,l}}  \|\| \g_{\y_{t,l}}  \| }{ \sum_{l=1}^D (\| \g_{\x_{t,l}}\|^2 + \| \g_{\y_{t,l}}\|^2  ) }  \bigg).
	\end{align}
	Using \eqref{apdx.eq.sam-lora_diff}, we have that
	\begin{align*}
		 |{\cal A}_{t,l}| & \leq \rho^2 \hat{L} \cdot {\cal O} \bigg(  \frac{ \| \g_{\x_{t,l}}  \|^2 + \| \g_{\y_{t,l}}  \|^2 }{ \sum_{l=1}^D (\| \g_{\x_{t,l}}\|^2 + \| \g_{\y_{t,l}}\|^2 ) } + \frac{ \| \g_{\x_{t,l}}  \|\| \g_{\y_{t,l}}  \| }{ \sum_{l=1}^D (\| \g_{\x_{t,l}}\|^2 + \| \g_{\y_{t,l}}\|^2  ) }  \bigg)  \nonumber \\
		 & = {\cal O}(\rho^2 \hat{L}).
	\end{align*}
	The proof is is thus completed.
\end{proof}


\section{Missing Proofs for OP}

\subsection{Unbalancedness of SGD in OP}\label{apdx.sec.sgd-op}

\begin{theorem}\label{thm.sgd}
	Applied SGD or SNGD on problem \eqref{eq.prob-op}, both of them ensure that  $\|\mathbf{x}_t\|^2 - \|\mathbf{y}_t\|^2 = \|\mathbf{x}_0\|^2 - \|\mathbf{y}_0\|^2$ for all $t > 0$. In other words, ${\cal B}_t$ keeps unchanged. 
\end{theorem}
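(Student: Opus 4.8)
The plan is to mirror the limiting-flow computations already used for Theorems \ref{thm.nop-sgd} and \ref{thm.nop-sngd}, now exploiting the especially simple structure of the OP gradients in \eqref{eq.sto-grad-op}. The key observation is that on a minibatch ${\cal M}_t$ both stochastic gradients are scalar multiples of the \emph{opposite} variable, with the \emph{same} scalar coefficient: writing $c_t := \frac{1}{|{\cal M}_t|} \sum_{\xi \in {\cal M}_t} (f_o^\xi)'(\x_t^\top \y_t)$, we have $\g_{\x_t} = c_t \y_t$ and $\g_{\y_t} = c_t \x_t$. This symmetry is precisely what forces balancedness to be conserved, and it is what I would isolate at the very start.

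First I would handle plain SGD. Taking $\eta \to 0$ yields the limiting flow $\frac{\td \x_t}{\td t} = -c_t \y_t$ and $\frac{\td \y_t}{\td t} = -c_t \x_t$. Then $\frac{\td \|\x_t\|^2}{\td t} = 2\x_t^\top \frac{\td \x_t}{\td t} = -2 c_t\, \x_t^\top \y_t$, and analogously $\frac{\td \|\y_t\|^2}{\td t} = -2 c_t\, \y_t^\top \x_t$. Since $\x_t^\top \y_t$ and $\y_t^\top \x_t$ are the same scalar, subtracting the two identities gives $\frac{\td}{\td t}\big(\|\x_t\|^2 - \|\y_t\|^2\big) = 0$, hence $\frac{\td {\cal B}_t}{\td t} = 0$ and ${\cal B}_t \equiv {\cal B}_0$ for all $t$.

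Second, the SNGD case follows from the identical computation up to a common positive scalar. Dividing both updates by $s_t := \sqrt{\|\g_{\x_t}\|^2 + \|\g_{\y_t}\|^2}$ merely replaces $c_t$ by $c_t / s_t$ in both flow equations, so $\frac{\td \|\x_t\|^2}{\td t} = -2 \frac{c_t}{s_t} \x_t^\top \y_t$ and $\frac{\td \|\y_t\|^2}{\td t} = -2 \frac{c_t}{s_t} \y_t^\top \x_t$ pick up exactly the same factor, and their difference again vanishes. Thus ${\cal B}_t$ is conserved under SNGD as well.

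There is no genuine obstacle here; the statement reduces to the same two-line argument as the NOP results. The only point needing a moment of care is checking that the SNGD normalization does not break the cancellation — and it does not, precisely because $s_t$ is a common scalar multiplying both $\x_t^\top \y_t$ and $\y_t^\top \x_t$, which are equal. Since the argument is pointwise in $t$ and independent of which minibatch defines $c_t$, it extends verbatim across arbitrary minibatch schedules, so no further assumptions on $\{{\cal M}_t\}$ are required.
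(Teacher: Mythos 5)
Your proposal is correct and follows essentially the same route as the paper's own proof: both exploit that the OP gradients are $\g_{\x_t} = c_t \y_t$ and $\g_{\y_t} = c_t \x_t$ with a common scalar $c_t = f_t'(\x_t^\top\y_t)$, so that $\frac{\td \|\x_t\|^2}{\td t} = -2 c_t \x_t^\top \y_t = \frac{\td \|\y_t\|^2}{\td t}$, and the SNGD case follows because the normalization is a shared scalar that preserves the cancellation. No gaps; your $c_t$ is just the paper's $f_t'$ written out as a minibatch average.
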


\begin{proof}
	We consider SGD and NSGD separately. 
	
	\textbf{SGD.} 
	It is straightforward to see that
	\begin{align*}
		\frac{\text{d} \|\x_t\|^2}{ \text{d} t}	 = - 2 f_t' (\x_t^\top \y_t) \x_t^\top \y_t =  \frac{\td \| \y_t\|^2}{ \td t}.
	\end{align*}
	This completes the proof of SGD.
	
	\textbf{NSGD.} The gradient update of NSGD is 
	\begin{align}
		\frac{\td \x_t}{ \td t} = - \frac{\g_{\x_t}}{\sqrt{\| \g_{\x_t}\|^2 + \| \g_{\y_t}\|^2}} , ~~~~\frac{\td \y_t}{ \td t} = - \frac{\g_{\y_t}}{\sqrt{\| \g_{\x_t}\|^2 + \| \g_{\y_t}\|^2}}.
	\end{align}
	Then we have that for NSGD, 
	\begin{align*}
		\frac{\text{d} \|\x_t\|^2}{ \text{d} t}	 = - 2 f_t' (\x_t^\top \y_t) \frac{\x_t^\top \y_t}{{\sqrt{\| \g_{\x_t}\|^2 + \| \g_{\y_t}\|^2}}} =  \frac{\td \| \y_t\|^2}{ \td t}.
	\end{align*}
	This gives the result for SNGD.
\end{proof}

\subsection{Proof of Theorem \ref{thm.sam-op-balance}}

To prove this theorem, we first focus on the dynamic of SAM. 

\begin{lemma}\label{lemma.sam-op-overall}
	Suppose that Assumption \ref{as.smooth-nop} holds. Consider the limiting flow of SAM in \eqref{eq.sam-op} with $\eta \rightarrow 0$. Let ${\cal B}_t := \frac{1}{2} \big( \| \x_t \|^2 - \| \y_t\|^2 \big)$ and $\rho$ be small. Then, for some $|{\cal A}_t|= {\cal O}(\rho^2 L |{\cal B}_t| )$, SAM guarantees
	\begin{align}\label{eq.sam-dynamic-nop}
		 \frac{\td {\cal B}_t }{\td t} =  - 2 \rho  \frac{ |f_t'(\x_t^\top\y_t) |}{\sqrt{\| \x_t \|^2 + \| \y_t \|^2}}  {\cal B}_t   + {\cal A}_t.
	\end{align}
	
\end{lemma}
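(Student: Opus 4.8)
The plan is to follow the template of the proof of Theorem~\ref{thm.nop-sam}, but to exploit the fact that in OP the gradient factors through the \emph{scalar} derivative $f_t'(\x_t^\top\y_t)$, which simplifies the bookkeeping considerably. First I would pass to the limiting flow $\eta\to0$, so that \eqref{eq.sam-op} becomes $\frac{\td\x_t}{\td t}=-f_t'(\tilde{\x}_t^\top\tilde{\y}_t)\,\tilde{\y}_t$ and $\frac{\td\y_t}{\td t}=-f_t'(\tilde{\x}_t^\top\tilde{\y}_t)\,\tilde{\x}_t$. Differentiating balancedness then gives
\begin{align*}
\frac{\td{\cal B}_t}{\td t}=\x_t^\top\frac{\td\x_t}{\td t}-\y_t^\top\frac{\td\y_t}{\td t}=-f_t'(\tilde{\x}_t^\top\tilde{\y}_t)\big(\x_t^\top\tilde{\y}_t-\y_t^\top\tilde{\x}_t\big).
\end{align*}

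The crucial step is the algebraic identity that makes balancedness resurface on the right-hand side. Substituting $\tilde{\x}_t=\x_t+\rho u_t\y_t$ and $\tilde{\y}_t=\y_t+\rho u_t\x_t$, the symmetric cross terms cancel and
\begin{align*}
\x_t^\top\tilde{\y}_t-\y_t^\top\tilde{\x}_t=\rho u_t\big(\|\x_t\|^2-\|\y_t\|^2\big)=2\rho u_t\,{\cal B}_t.
\end{align*}
Plugging in $u_t=\sgn\!\big(f_t'(\x_t^\top\y_t)\big)/\sqrt{\|\x_t\|^2+\|\y_t\|^2}$ yields $\frac{\td{\cal B}_t}{\td t}=-2\rho\,\frac{\sgn(f_t'(\x_t^\top\y_t))\,f_t'(\tilde{\x}_t^\top\tilde{\y}_t)}{\sqrt{\|\x_t\|^2+\|\y_t\|^2}}\,{\cal B}_t$, which already has the claimed form except that the derivative is evaluated at the perturbed point $\tilde{\x}_t^\top\tilde{\y}_t$ rather than at $\x_t^\top\y_t$.

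To close the gap I would expand $\tilde{\x}_t^\top\tilde{\y}_t=\x_t^\top\y_t+\rho\,\sgn(f_t'(\x_t^\top\y_t))\sqrt{\|\x_t\|^2+\|\y_t\|^2}+\rho^2u_t^2\,\x_t^\top\y_t$, so that $|\tilde{\x}_t^\top\tilde{\y}_t-\x_t^\top\y_t|={\cal O}(\rho\sqrt{\|\x_t\|^2+\|\y_t\|^2})$. By the $L$-Lipschitzness of $f_t'$ granted by Assumption~\ref{as.smooth-nop}, one has $f_t'(\tilde{\x}_t^\top\tilde{\y}_t)=f_t'(\x_t^\top\y_t)+\delta_t$ with $|\delta_t|={\cal O}(\rho L\sqrt{\|\x_t\|^2+\|\y_t\|^2})$. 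Multiplying by the sign factor turns the leading term into $|f_t'(\x_t^\top\y_t)|$ and collects the remainder into ${\cal A}_t=-2\rho\,\sgn(f_t'(\x_t^\top\y_t))\,\delta_t\,{\cal B}_t/\sqrt{\|\x_t\|^2+\|\y_t\|^2}$; after the factor $\sqrt{\|\x_t\|^2+\|\y_t\|^2}$ cancels, $|{\cal A}_t|={\cal O}(\rho^2L|{\cal B}_t|)$, which is exactly the claimed bound.

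The routine parts (the Lipschitz estimate and the expansion of $\tilde{\x}_t^\top\tilde{\y}_t$) are straightforward; the part that requires care is the sign bookkeeping -- ensuring that $\sgn(f_t'(\x_t^\top\y_t))$ combines with $f_t'(\tilde{\x}_t^\top\tilde{\y}_t)$ to produce the absolute value $|f_t'(\x_t^\top\y_t)|$ up to a controlled residual, and confirming that the residual scales with $\sqrt{\|\x_t\|^2+\|\y_t\|^2}$ so that it divides down to order $\rho^2$ rather than leaving a stray lower-order term. I would also invoke the finite-convergence (non-divergence) assumption to keep $|\x_t^\top\y_t|\le\frac12(\|\x_t\|^2+\|\y_t\|^2)$ and the ${\cal O}(\cdot)$ constants uniform, so that the ${\cal O}(\rho^2)$ correction appearing inside $\tilde{\x}_t^\top\tilde{\y}_t$ is genuinely higher order and is absorbed into ${\cal A}_t$.
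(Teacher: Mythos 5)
Your proposal is correct and follows essentially the same route as the paper's proof: both derive $\frac{\td {\cal B}_t}{\td t} = -2\rho u_t \tilde{f}_t' {\cal B}_t$ from the cancellation of the $\x_t^\top\y_t$ cross terms, then use the Lipschitz continuity of $f_t'$ (Assumption \ref{as.smooth-nop}) on $|\tilde{\x}_t^\top\tilde{\y}_t - \x_t^\top\y_t| = {\cal O}(\rho\sqrt{\|\x_t\|^2+\|\y_t\|^2})$ so that the sign factor turns the leading term into $|f_t'|$ and the $\sqrt{\|\x_t\|^2+\|\y_t\|^2}$ factor cancels to leave $|{\cal A}_t| = {\cal O}(\rho^2 L |{\cal B}_t|)$. (One trivial note: the inequality $|\x_t^\top\y_t| \le \frac{1}{2}(\|\x_t\|^2+\|\y_t\|^2)$ is just Cauchy--Schwarz with AM--GM, not a consequence of the non-divergence assumption.)
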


\begin{proof}
	For notational convenience, we write $f_t' := f_t'(\x_t^\top\y_t) $ and $\tilde{f}_t' := f_t'(\tilde{\x}_t^\top\tilde{\y}_t) $. Using similar arguments as Theorem \ref{thm.nop-sam}, we have that 
	\begin{align}\label{eq.apdx.sto-amp-overall}
		\frac{1}{2}\frac{\td }{ \td t} \bigg( \|\x_t\|^2 - \| \y_t \|^2 \bigg)	& = - \rho u_t  \tilde{f}_t'  \cdot \big( \| \x_t \|^2 - \| \y_t \|^2 \big) \\
		& = - \rho \frac{\sgn(f'_t) \tilde{f}'_t}{\sqrt{\|\x_t\|^2 + \|  \y_t \|^2}}  \cdot \big( \| \x_t \|^2 - \| \y_t \|^2 \big) \nonumber \\
		& = - \rho \frac{ |f'_t| }{\sqrt{\|\x_t\|^2 + \|  \y_t \|^2}}  \cdot \big( \| \x_t \|^2 - \| \y_t \|^2 \big) \nonumber \\
		& ~~~~~~~~~~~~~~ + \underbrace{\rho \frac{ \sgn(f'_t) (f'_t - \tilde{f}'_t)}{\sqrt{\|\x_t\|^2 + \|  \y_t \|^2}}  \cdot \big( \| \x_t \|^2 - \| \y_t \|^2 \big)}_{:= {\cal A}_t}. \nonumber 
	\end{align}
	
	Next we bound $|{\cal A}_t|$. To start with, we have that
	\begin{align}\label{eq.apdx.diff}
		\big| \tilde{\x}_t ^\top \tilde{\y}_t - \x_t^\top \y_t   \big| & = \big|\rho^2 u_t^2  \x_t ^\top \y_t + \rho u_t \|\x_t\|^2 + \rho u_t \| \y_t\|^2   \big| \\
		& \leq \rho^2 \frac{| \x_t ^\top \y_t| }{ \|\x_t\|^2 + \| \y_t\|^2  } + \rho \sqrt{\|\x_t\|^2 + \| \y_t\|^2  } \nonumber \\
		& \leq \frac{\rho^2}{2} + \rho \sqrt{\|\x_t\|^2 + \| \y_t\|^2  }.   \nonumber
	\end{align}
	Using Assumption \ref{as.smooth-nop} and \eqref{eq.apdx.diff}, we arrive at 
	\begin{align}\label{eq.apdx.grad_diff}
		|f'_t - \tilde{f'_t}| \leq L \big| \tilde{\x}_t ^\top \tilde{\y}_t - \x_t^\top \y_t   \big| = {\cal O}(\rho L \sqrt{\|\x_t\|^2 + \| \y_t\|^2  }  ) .
	\end{align}

	Hence, we arrive at
	\begin{align*}
		|{\cal A}_t| \leq \rho |f'_t - \tilde{f'_t}| \bigg| \frac{\| \x_t \|^2 - \| \y_t \|^2}{\sqrt{\| \x_t \|^2 + \| \y_t \|^2}} \bigg| = {\cal O}(\rho^2 L |{\cal B}_t|).
	\end{align*}
	The proof is thus completed. 
\end{proof}

Next, the proof of Theorem \ref{thm.sam-op-balance} is provided.

\begin{proof}
	Lemma \ref{lemma.sam-op-overall} has already indicated the concentration of ${\cal B}_t$ towards $0$, if the magnitude of the first term is larger than $|{\cal A}_t|$. To see this, notice that we can lower bound  	$2|{\cal B}_t| / \sqrt{\| \x_t \|^2 + \| \y_t \|^2}$ by
	\begin{align}\label{apdx.eq.sam-op-lb}
		\bigg| \frac{\| \x_t \|^2 - \| \y_t \|^2}{\sqrt{\| \x_t \|^2 + \| \y_t \|^2}} \bigg| = \bigg| \frac{ (\| \x_t \| + \| \y_t \|)(\| \x_t \| - \| \y_t \|)}{\sqrt{\big| \| \x_t \|^2 + \| \y_t \|^2\big|} } \bigg| \geq \big| \| \x_t \| - \| \y_t \|\big| = {\cal C}_t.
	\end{align}
	Hence, long as $\rho |f'_t(\x_t^\top \y_t)| \cdot {\cal C}_t > {\cal O}(\rho^2 L |{\cal B}_t|)$, we have the first term dominating the dynamic of SAM, leading to contraction of ${\cal B}_t$. This completes the proof to the first part.
	
	Next we prove the second part, which is the lower- and upper- bound on ${\cal B}_t$. The lower bound can be seen from \eqref{apdx.eq.sam-op-lb}. For the upper bound, we have
	\begin{align}\label{apdx.eq.sam-op-ub}
		\bigg| \frac{\| \x_t \|^2 - \| \y_t \|^2}{\sqrt{\| \x_t \|^2 + \| \y_t \|^2}} \bigg| \leq \bigg| \frac{\| \x_t \|^2 - \| \y_t \|^2}{\sqrt{| \| \x_t \|^2 - \| \y_t \|^2 |}} \bigg| = \sqrt{2 |{\cal B}_t|}.
	\end{align}
	Plugging \eqref{apdx.eq.sam-op-ub} into \eqref{eq.apdx.sto-amp-overall} finishes the proof.
\end{proof}

\subsection{$m$-sharpness for OP}\label{apdx.m-sharpness}

$m$-sharpness is a variant of SAM that is empirically observed to improve generalization, and it is especially useful for distributed training on multiple GPUs \citep{foret2021}. However, the reason behind the improved performance is not fully understood. \citep{maksym2022} show that $m$-sharpness is more sparse-promoting for diagonal linear neural networks minimized via a quadratic loss. However, diagonal linear networks are not scale-invariant.

For consistent notation with \eqref{eq.sam-op}, we use $f_t(\cdot)$ to denote the loss function on minibatch ${\cal M}_t$. In $m$-sharpness, the minibatch ${\cal M}_t$ is divided into $m$ disjoint subsets. Without loss of generality, we also assume that the minibatch is evenly divided. We denote the loss function on each subset as $f_{t,i}, i \in \{1, 2, \ldots, m\}$. Note that we have $\frac{1}{m}\sum_{i=1}^m f_{t,i} = f_t$. With these definitions, the update of $m$-sharpness can be written as
\begin{subequations}\label{eq.sam-op-m-sharpness}
\begin{align}
	\tilde{\x}_{t,i} = \x_t +\rho  u_{t,i} \y_t, &~~~~~\tilde{\y}_{t,i} = \y_t + \rho u_{t,i} \x_t	 \\
	\g_{\tilde{\x}_{t,i}}^i = f_{t,i}' (\tilde{\x}_{t,i}^\top \tilde{\y}_{t,i} )  \tilde{\y}_{t,i},	 &~~~~ \g_{\tilde{\y}_{t,i}}^i =  f_{t,i}' (\tilde{\x}_{t,i}^\top\tilde{\y}_{t,i} ) \tilde{\x}_{t,i} \\
	\x_{t+1} = \x_t - \eta \frac{1}{m} \sum_{i=1}^m \g_{\tilde{\x}_{t,i}}^i, &~~~~ \y_{t+1} = \y_t - \eta \frac{1}{m} \sum_{i=1}^m  \g_{\tilde{\y}_{t,i}}^i.
\end{align}
\end{subequations}
where $u_{t,i} := \sgn(f_{t,i}'(\x_t^\top\y_t) ) / \sqrt{\|\x_t \|^2 + \|  \y_t\|^2}$. Comparing with the SAM update for OP in \eqref{eq.sam-op}, the difference is that perturbed gradient is calculated on each $f_{t,i}$. 
Next, we analyze the dynamic of SAM with $m$-sharpness.

\begin{lemma}
	Suppose that Assumption \ref{as.smooth-nop} holds. Consider the limiting flow of SAM in \eqref{eq.sam-op-m-sharpness} with $\eta \rightarrow 0$. Let ${\cal B}_t := \frac{1}{2} \big( \| \x_t \|^2 - \| \y_t\|^2 \big)$ and $\rho$ be small. Then, for some $|{\cal A}_t|= {\cal O}(\rho^2 L )$, SAM guarantees that
	\begin{align}\label{eq.sam-dynamic?}
		 \frac{\td {\cal B}_t }{\td t} =  - 2 \frac{\rho}{m}  \frac{ \sum_{i=1}^m |f_{t,i}'(\x_t^\top\y_t) |}{\sqrt{\| \x_t \|^2 + \| \y_t \|^2}}  {\cal B}_t   + {\cal A}_t.
	\end{align}
	
\end{lemma}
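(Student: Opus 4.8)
The plan is to mirror the proof of Lemma \ref{lemma.sam-op-overall} almost verbatim; the only new ingredient is that the single perturbation direction $u_t$ is replaced by the $m$ per-subset directions $u_{t,i}$. First I would write the limiting flow of \eqref{eq.sam-op-m-sharpness} with $\eta \rightarrow 0$ as $\frac{\td \x_t}{\td t} = -\frac{1}{m}\sum_{i=1}^m \tilde{f}_{t,i}' \tilde{\y}_{t,i}$ and $\frac{\td \y_t}{\td t} = -\frac{1}{m}\sum_{i=1}^m \tilde{f}_{t,i}' \tilde{\x}_{t,i}$, abbreviating $\tilde{f}_{t,i}' := f_{t,i}'(\tilde{\x}_{t,i}^\top \tilde{\y}_{t,i})$. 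Taking the inner products $\x_t^\top (\td\x_t/\td t)$ and $\y_t^\top(\td\y_t/\td t)$ and substituting $\tilde{\x}_{t,i} = \x_t + \rho u_{t,i}\y_t$, $\tilde{\y}_{t,i} = \y_t + \rho u_{t,i}\x_t$, the cross terms $\x_t^\top\y_t$ cancel upon subtraction, leaving
\begin{align*}
\frac{\td {\cal B}_t}{\td t} = \frac{1}{2}\frac{\td(\|\x_t\|^2 - \|\y_t\|^2)}{\td t} = -\frac{\rho}{m}\sum_{i=1}^m \tilde{f}_{t,i}' \, u_{t,i}\big(\|\x_t\|^2 - \|\y_t\|^2\big),
\end{align*}
which is the direct analogue of the first display in the proof of Lemma \ref{lemma.sam-op-overall}, now summed over the subsets $i$.

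Next I would insert $u_{t,i} = \sgn(f_{t,i}'(\x_t^\top\y_t))/\sqrt{\|\x_t\|^2 + \|\y_t\|^2}$ and split $\tilde{f}_{t,i}' = f_{t,i}'(\x_t^\top\y_t) + \big(\tilde{f}_{t,i}' - f_{t,i}'(\x_t^\top\y_t)\big)$. The leading part uses $\sgn(f_{t,i}')\,f_{t,i}' = |f_{t,i}'|$, which after applying $\|\x_t\|^2 - \|\y_t\|^2 = 2{\cal B}_t$ produces exactly the claimed main term $-\frac{2\rho}{m}\frac{\sum_i |f_{t,i}'(\x_t^\top\y_t)|}{\sqrt{\|\x_t\|^2 + \|\y_t\|^2}}{\cal B}_t$; the remainder is collected into ${\cal A}_t$. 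The key estimate for the remainder is that each perturbed inner product is close to the unperturbed one: since $|u_{t,i}| = 1/\sqrt{\|\x_t\|^2 + \|\y_t\|^2}$ independently of $i$, expanding $\tilde{\x}_{t,i}^\top\tilde{\y}_{t,i} = \x_t^\top\y_t + \rho u_{t,i}(\|\x_t\|^2 + \|\y_t\|^2) + \rho^2 u_{t,i}^2\,\x_t^\top\y_t$ reproduces the bound $|\tilde{\x}_{t,i}^\top\tilde{\y}_{t,i} - \x_t^\top\y_t| = {\cal O}(\rho\sqrt{\|\x_t\|^2 + \|\y_t\|^2})$ of \eqref{eq.apdx.diff}. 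Assumption \ref{as.smooth-nop} then yields $|\tilde{f}_{t,i}' - f_{t,i}'(\x_t^\top\y_t)| = {\cal O}(\rho L \sqrt{\|\x_t\|^2 + \|\y_t\|^2})$ uniformly in $i$, so the $1/m$-average contributes the same order; dividing by $\sqrt{\|\x_t\|^2 + \|\y_t\|^2}$ and multiplying by $|\|\x_t\|^2 - \|\y_t\|^2| = 2|{\cal B}_t|$ gives $|{\cal A}_t| = {\cal O}(\rho^2 L |{\cal B}_t|)$, which is ${\cal O}(\rho^2 L)$ under the finite-convergence (boundedness) assumption.

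I expect no genuine obstacle, as the argument is a faithful extension of Lemma \ref{lemma.sam-op-overall}. The one step that needs care, and the main point to get right, is the error bound on ${\cal A}_t$: specifically the bookkeeping across the $m$ subsets, ensuring that each $u_{t,i}$ carries the sign of the \emph{unperturbed} $f_{t,i}'(\x_t^\top\y_t)$ so that $\sgn(f_{t,i}')\tilde{f}_{t,i}'$ reproduces $|f_{t,i}'|$ up to a Lipschitz-controlled error, while all $u_{t,i}$ share the same magnitude so that every per-subset remainder stays at order $\rho^2 L$. A possible sign mismatch between $\tilde{f}_{t,i}'$ and $f_{t,i}'$ when $|f_{t,i}'|$ is small is harmless, since it is already absorbed into the Lipschitz bound feeding ${\cal A}_t$.
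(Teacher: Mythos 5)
Your proposal is correct and follows essentially the same route as the paper's proof: the paper likewise derives $\frac{\td {\cal B}_t}{\td t} = -\frac{\rho}{m}\sum_{i=1}^m u_{t,i}\tilde{f}_{t,i}'(\|\x_t\|^2-\|\y_t\|^2)$, splits $\tilde{f}_{t,i}'$ into $f_{t,i}'$ plus a Lipschitz-controlled error using the perturbation bound of \eqref{eq.apdx.diff}, and collects the per-subset remainders ${\cal A}_{t,i}$ with $|{\cal A}_{t,i}| = {\cal O}(\rho L |{\cal B}_t|)$ into ${\cal A}_t$. Your explicit remark that the final ${\cal O}(\rho^2 L |{\cal B}_t|)$ bound becomes ${\cal O}(\rho^2 L)$ via the boundedness (finite-convergence) assumption is a detail the paper leaves implicit, but it is the same argument.
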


\begin{proof}
	For notational convenience, we write $f_{t,i}' := f_{t,i}'(\x_t^\top\y_t) $ and $\tilde{f}_{t,i}' := f_{t,i}'(\tilde{\x}_{t,i}^\top\tilde{\y}_{t,i}) $. Then, we have that 
	\begin{align}\label{apdx.eq.m-sam-dynamic}
		\frac{1}{2}\frac{\td }{ \td t} \bigg( \|\x_t\|^2 - \| \y_t \|^2 \bigg)	& = - \frac{\rho}{m} \sum_{i=1}^m  u_{t,i}  \tilde{f}_{t,i}'  \cdot \big( \| \x_t \|^2 - \| \y_t \|^2 \big) \\
		& = -  \frac{\rho}{m} \sum_{i=1}^m \frac{\sgn(f'_{t,i}) \tilde{f}'_{t,i}}{\sqrt{\|\x_t\|^2 + \|  \y_t \|^2}}  \cdot \big( \| \x_t \|^2 - \| \y_t \|^2 \big) \nonumber \\
		& = - \frac{\rho}{m} \frac{ \sum_{i=1}^m |f'_{t,i}| }{\sqrt{\|\x_t\|^2 + \|  \y_t \|^2}}  \cdot \big( \| \x_t \|^2 - \| \y_t \|^2 \big) \nonumber \\
		& ~~~~~~~~~~~~~~ +\frac{\rho}{m} \sum_{i=1}^m  \underbrace{ \frac{ \sgn(f'_{t,i}) (f'_{t, i} - \tilde{f}'_{t,i})}{\sqrt{\|\x_t\|^2 + \|  \y_t \|^2}}  \cdot \big( \| \x_t \|^2 - \| \y_t \|^2 \big)}_{:= {\cal A}_{t,i}}. \nonumber 
	\end{align}
	
	Next, using \eqref{eq.apdx.diff} and Assumption \ref{as.smooth-nop}, we have
	\begin{align*}
		|f'_{t,i} - \tilde{f}'_{t,i}| \leq L \big| \tilde{\x}_{t,i} ^\top \tilde{\y}_{t,i} - \x_t^\top \y_t   \big| = {\cal O}(\rho L \sqrt{\|\x_t\|^2 + \| \y_t\|^2  }  ) .
	\end{align*}

	Hence, we can bound $|{\cal A}_{t,i}|$ as
	\begin{align*}
		|{\cal A}_{t,i}| \leq  |f'_{t,i} - \tilde{f}'_{t,i}| \bigg| \frac{\| \x_t \|^2 - \| \y_t \|^2}{\sqrt{\| \x_t \|^2 + \| \y_t \|^2}} \bigg| = {\cal O}(\rho L |{\cal B}_t|).
	\end{align*}
	The proof is thus completed by plugging $|{\cal A}_{t,i}|$ into \eqref{apdx.eq.m-sam-dynamic}.
\end{proof}

\subsection{Extension to Layer-wise OP}

We start with the notation. Let $l \in \{ 1, 2, \ldots, D\}$ be the layer index. Denote $f_t$ as the loss on minibatch ${\cal M}_t$. Let $f'_{t,l}:= \nabla_l f_t( \{ \x_{t,l}^\top \y_{t,l} \}_l)$, i.e., the $l$-th entry of gradient (w.r.t. the variable $\x_{t,l}^\top \y_{t,l} $), $\tilde{f}'_{t,l}:= \nabla_l f_t( \{ \tilde{\x}_{t,l}^\top \tilde{\y}_{t,l} \}_l)$, and $u_t := 1 / \sqrt{ \sum_{l=1}^D |f'_{t,l}|^2 \big[ \| \x_{t,l}\|^2 + \|  \y_{t,l}\|^2 \big]}$. The update of SAM for layer $l$ can be written as

\begin{subequations}\label{eq.sam-op-multi-layer}
\begin{align}
	\tilde{\x}_{t,l} = \x_{t,l}+  \rho u_t f'_{t,l}\y_{t,l}, &~~~~~\tilde{\y}_{t,l} = \y_{t,l} +  \rho u_t f'_{t,l} \x_{t,l}, \	\\
	\g_{\tilde{\x}_{t,l}} =  \tilde{f}'_{t,l} \tilde{\y}_{t,l},	 &~~~~ \g_{\tilde{\y}_{t,l}} =  \tilde{f}'_{t,l} \tilde{\x}_{t,l} \\
	\x_{t+1, l} = \x_{t,l} - \eta \g_{\tilde{\x}_{t,l}}, &~~~~ \y_{t+1, l} = \y_{t,l} - \eta \g_{\tilde{\y}_{t,l}}.
\end{align}
\end{subequations}

\textbf{Refined assumption for LoRA.} Our proof only needs block-wise smoothness, i.e.,
	\begin{align}\label{eq.apdx.op-multi-layer-assumption}
		| \nabla_l f_t ( \x_l^\top \y_l ) - \nabla_l f_t (\mathbf{a}_l^\top \mathbf{b}_l)  |^2 \leq \hat{L}^2 |\x_l^\top \y_l - \mathbf{a}_l^\top \mathbf{b}_l |^2, \;\forall l,
	\end{align}
	where $\nabla_l$ refers to the gradient on $\x_l^\top \y_l$. It can be seen that $\sqrt{D}\hat{L} \geq L$, but one can assume that $\sqrt{D}\hat{L} \approx L$ for more clear intuition.

\begin{theorem}\label{thm.sam-op-ml}
	Suppose that block smoothness assumption in \eqref{eq.apdx.op-multi-layer-assumption} holds. Consider the limiting flow of SAM in \eqref{eq.sam-op-multi-layer} with $\eta \rightarrow 0$ and a sufficiently small $\rho$. Let ${\cal B}_{t,l} := \frac{1}{2} \big( \| \x_{t,l} \|^2 - \| \y_{t,l}\|^2 \big)$ and ${\cal B}_t^{\max} = \max_l |{\cal B}_{t,l}|$. For some $|{\cal A}_t|= {\cal O}(\rho^2 \hat{L} {\cal B}_t^{\max} )$, SAM guarantees that
	\begin{align}\label{apdx.sam-lora-overall-op}
		 \frac{\td {\cal B}_t }{\td t} =  -\rho  \frac{ \sum_{l=1}^D |f'_{t,l}|^2 \big( \| \x_{t,l} \|^2 - \| \y_{t,l} \|^2 \big) }{\sqrt{ \sum_{l=1}^D |f'_{t,l}|^2 \big[ \| \x_{t,l}\|^2 + \|  \y_{t,l}\|^2 \big]}}  + {\cal A}_t.
	\end{align}
	Furthermore, for some $|{\cal A}_{t,l}|= {\cal O}(\rho^2 \hat{L} |{\cal B}_{t,l}|)$, per layer balancedness satisfies that 
	\begin{align}\label{apdx.sam-lora-per-layer-op}
		 \frac{\td {\cal B}_{t,l} }{\td t} =  -\rho  \frac{ |f'_{t,l}|^2 \big( \| \x_{t,l} \|^2 - \| \y_{t,l} \|^2 \big) }{\sqrt{ \sum_{l=1}^D |f'_{t,l}|^2 \big[ \| \x_{t,l}\|^2 + \|  \y_{t,l}\|^2 \big]}}  + {\cal A}_{t,i}.
	\end{align}
\end{theorem}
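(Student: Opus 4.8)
The plan is to mirror the single-layer argument of Lemma \ref{lemma.sam-op-overall}, carried out per layer and then aggregated as in Theorem \ref{thm.sam-nop-lora}. First I would pass to the limiting flow $\eta \to 0$, which turns the update \eqref{eq.sam-op-multi-layer} into $\frac{\td \x_{t,l}}{\td t} = - \tilde{f}'_{t,l}(\y_{t,l} + \rho u_t f'_{t,l} \x_{t,l})$ and $\frac{\td \y_{t,l}}{\td t} = - \tilde{f}'_{t,l}(\x_{t,l} + \rho u_t f'_{t,l} \y_{t,l})$. Differentiating ${\cal B}_{t,l}=\frac12(\|\x_{t,l}\|^2-\|\y_{t,l}\|^2)$ amounts to computing $\x_{t,l}^\top \frac{\td\x_{t,l}}{\td t} - \y_{t,l}^\top \frac{\td\y_{t,l}}{\td t}$; the symmetric cross terms proportional to $\tilde{f}'_{t,l}\,\x_{t,l}^\top\y_{t,l}$ cancel, leaving the clean identity $\frac{\td {\cal B}_{t,l}}{\td t} = -\rho u_t \tilde{f}'_{t,l} f'_{t,l}\big(\|\x_{t,l}\|^2-\|\y_{t,l}\|^2\big)$.

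Next I would linearize by writing $\tilde{f}'_{t,l} f'_{t,l} = (f'_{t,l})^2 + f'_{t,l}(\tilde{f}'_{t,l}-f'_{t,l})$. Substituting the definition of $u_t$, the $(f'_{t,l})^2$ contribution reproduces exactly the leading term of \eqref{apdx.sam-lora-per-layer-op}, and summing over $l$ gives the leading term of \eqref{apdx.sam-lora-overall-op}; the remainder is the error ${\cal A}_{t,l} := -\rho u_t f'_{t,l}(\tilde{f}'_{t,l}-f'_{t,l})\big(\|\x_{t,l}\|^2-\|\y_{t,l}\|^2\big)$. To control it, I would first estimate the perturbation of the scale-invariant argument as in \eqref{eq.apdx.diff}: expanding gives $\tilde{\x}_{t,l}^\top\tilde{\y}_{t,l}-\x_{t,l}^\top\y_{t,l} = \rho u_t f'_{t,l}(\|\x_{t,l}\|^2+\|\y_{t,l}\|^2) + \rho^2 u_t^2 (f'_{t,l})^2 \x_{t,l}^\top\y_{t,l}$. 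The crucial observation is that $u_t^2 |f'_{t,l}|^2(\|\x_{t,l}\|^2+\|\y_{t,l}\|^2)\le 1$, since this is a single summand of the denominator defining $u_t$; this bounds the perturbation by ${\cal O}(\rho)$, so block smoothness \eqref{eq.apdx.op-multi-layer-assumption} yields $|\tilde{f}'_{t,l}-f'_{t,l}| = {\cal O}(\rho\hat{L})$. Combined with the companion inequality $u_t|f'_{t,l}| \le (\|\x_{t,l}\|^2+\|\y_{t,l}\|^2)^{-1/2}$ and $|\|\x_{t,l}\|^2-\|\y_{t,l}\|^2| = 2|{\cal B}_{t,l}|$, this delivers the per-layer bound $|{\cal A}_{t,l}| = {\cal O}(\rho^2\hat{L}|{\cal B}_{t,l}|)$.

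The main obstacle is the aggregate estimate $|{\cal A}_t| = {\cal O}(\rho^2\hat{L}\,{\cal B}_t^{\max})$. A naive term-by-term sum of the per-layer bounds only gives ${\cal O}(\rho^2\hat{L}\sum_l|{\cal B}_{t,l}|)$, which is a factor $D$ too large. The fix is to factor out ${\cal B}_t^{\max}$ at the outset, bounding $|{\cal A}_t| \le 2\rho\,{\cal B}_t^{\max}\, u_t \sum_l |f'_{t,l}|\,|\tilde{f}'_{t,l}-f'_{t,l}|$, and then to show the remaining sum is $D$-free. Inserting the smoothness bound $|\tilde{f}'_{t,l}-f'_{t,l}| \le \hat{L}\big(\rho u_t|f'_{t,l}|(\|\x_{t,l}\|^2+\|\y_{t,l}\|^2)+\tfrac{\rho^2}{2}\big)$, the dominant contribution becomes $\hat{L}\rho\, u_t^2 \sum_l |f'_{t,l}|^2(\|\x_{t,l}\|^2+\|\y_{t,l}\|^2)$, and this sum equals \emph{exactly} $1$ by the definition of $u_t$. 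The subleading piece, proportional to $u_t\sum_l|f'_{t,l}|$, is bounded by a constant under the finite-convergence assumption and contributes only ${\cal O}(\rho^2\hat{L})$. Keeping careful track of this exact normalization identity—rather than bounding each layer in isolation—is precisely what avoids the $D$ blow-up and yields $u_t\sum_l|f'_{t,l}||\tilde{f}'_{t,l}-f'_{t,l}| = {\cal O}(\rho\hat{L})$, completing the proof of both \eqref{apdx.sam-lora-overall-op} and \eqref{apdx.sam-lora-per-layer-op}.
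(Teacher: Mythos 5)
Your proposal is correct and follows essentially the same route as the paper: the same per-layer flow computation with cancellation of the cross terms, the same decomposition $\tilde{f}'_{t,l}f'_{t,l} = (f'_{t,l})^2 + f'_{t,l}(\tilde{f}'_{t,l}-f'_{t,l})$ defining ${\cal A}_{t,l}$, and the same block-smoothness bound on $|\tilde{f}'_{t,l}-f'_{t,l}|$. The normalization identity $u_t^2\sum_l |f'_{t,l}|^2(\|\x_{t,l}\|^2+\|\y_{t,l}\|^2)=1$ that you highlight to avoid the factor-$D$ blow-up in the aggregate bound is precisely the content of the step the paper compresses into ``applying similar argument as Theorem \ref{thm.sam-nop-lora}'', so you have simply made explicit what the paper leaves implicit.
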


\begin{proof}
Using a similar derivation as before, we have that 
\begin{align}\label{apdx.eq.sam-dynamic-opml}
		\frac{1}{2}\frac{\td }{ \td t} \bigg( \|\x_{t,l}\|^2 - \| \y_{t,l} \|^2 \bigg)
		& = - \rho u_t |f'_{t,l}|^2  \cdot \big( \| \x_{t,l} \|^2 - \| \y_{t,l} \|^2 \big)  \nonumber \\
		& ~~~~~~~~~~~~~~ + \underbrace{ \rho u_t f'_{t,l} (f'_{t, l} - \tilde{f}'_{t,l})  \cdot \big( \| \x_{t,l} \|^2 - \| \y_{t,l} \|^2 \big)}_{:= {\cal A}_{t,l}} \nonumber 
	\end{align}
	
	Next, based on \eqref{eq.apdx.op-multi-layer-assumption}, we have that 
	\begin{align*}
		|f'_{t,l} - \tilde{f}'_{t,l}| \leq \hat{L} \big| \tilde{\x}_{t,l} ^\top \tilde{\y}_{t,l} - \x_{t,l}^\top \y_{t,l}  \big| \leq \rho \hat{L} u_t |f'_{t,l}| \big( \| \x_{t,l} \|^2 + \| \y_{t,l} \|^2 \big)  + \rho^2 \hat{L}  u_t^2 |f'_{t,l}|^2  |\x_{t,l}^\top \y_{t,l}|.
	\end{align*}
	Combining these two equations, and applying similar argument as Theorem \ref{thm.sam-nop-lora}, it is not difficult to arrive at $|{\cal A}_{t,i}| = {\cal O}(\rho^2 \hat{L} |{\cal B}_{t,l}|)$ and $|{\cal A}_{t}| = {\cal O}(\rho^2 \hat{L} {\cal B}_t^{\max})$.
	\end{proof}

\subsection{Proof of Lemma \ref{lemma.balancedness-sharpness-equivalence}}

\begin{proof}
Within ${\cal W}^*$, the Hessian on $ (\x, \y)$ can be calculated as $f''(\x^\top\y)[\y^\top, \x^\top]^\top [\y^\top, \x^\top] $. The largest eigenvalue is $f''(w) \big(\| \x \|^2 + \| \y \|^2) $. By the AM-GM inequality, it can be seen that the largest eigenvalue is minimized when $\| \x \| = \| \y\|$, whose balancedness is $0$.
\end{proof}

\section{Missing Experimental Details}\label{apdx.xr}

We mainly focus on finetuning LMs with LoRA. This setting naturally includes distributional shift -- the finetuning dataset does not usually have the same distribution as the pretraining dataset as validated through zero-shot performance. All experiments are performed on a server with AMD EPYC 7742 CPUs and NVIDIA GeForce RTX 3090 GPUs each with 24GiB memory. All numerical results from Section \ref{sec.numerical} report test performance (e.g., accuracy, F1 scores, or BLEU scores) and the standard deviation across multiple runs.

\subsection{Details on Datasets}

Our evaluations are carried out on commonly-used datasets in the literature.

\textbf{GLUE benchmark.} GLUE is designed to provide a general-purpose evaluation of language
understanding \citep{wang2018glue}. Those adopted in our work include MNLI (inference, \citep{mnli}), SST-2 (sentiment analysis, \citep{sst2}), MRPC (paraphrase detection, \citep{mrpc}), CoLA (linguistic acceptability \citep{cola}), QNLI (inference \citep{qnli}), QQP\footnote{\url{https://quoradata.quora.com/First-Quora-Dataset-Release-Question-Pairs}} (question-answering), RTE\footnote{\url{https://paperswithcode.com/dataset/rte}} (inference), and STS-B (textual similarity \citep{stsb}). These datasets are released under different permissive licenses.

\textbf{SuperGLUE benchmark.} SuperGLUE \citep{wang2019superglue} is another commonly adopted benchmark for language understanding and is more challenging compared with GLUE. The considered datasets include CB (inference, \citep{cb}), ReCoRD (multiple-choice question answering \citep{record}), COPA (question answering \citep{copa}). These datasets are released under different permissive licenses.

\textbf{WebNLG Challenge.} This dataset is commonly used for data-to-text evaluation \citep{gardent2017webnlg}. It has 22K examples in total with 14 distinct categories. Among them, 9 are seen during training, and the unseen training data are used to test the generalization performance. The dataset is released under license CC BY-NC-SA 4.0.

\textbf{Additional datasets.} We also use SQuAD (question answering \citep{squad}) in our experiments, which is released under license CC BY-SA 4.0. Other datasets include TREC (topic classification \citep{trec}) and SNLI (inference \citep{snli}). Both of them are licensed under CC BY-SA 4.0.

\subsection{Details on Language Models}

We summarize the adopted language models in our evaluation. All model checkpoints are obtained from HuggingFace.

\textbf{RoBERTa-large.} This is a $355$M parameter model. The model checkpoint\footnote{\url{https://huggingface.co/FacebookAI/roberta-large}} is released under the MIT license.

\textbf{OPT-1.3B.} 
The model checkpoint\footnote{\url{https://huggingface.co/facebook/opt-1.3b}} is released under a non-commercial license. \footnote{\url{https://github.com/facebookresearch/metaseq/blob/main/projects/OPT/MODEL_LICENSE.md}}

\textbf{GPT2-medium.} This is a $345$M parameter model. Its checkpoint\footnote{\url{https://s3.amazonaws.com/models.huggingface.co/bert/gpt2-medium-pytorch_model.bin}} is under MIT License.

\subsection{Few-shot Learning with RoBERTa and OPT}

\textbf{Experiments on RoBERTa-large.} We follow the $k$-shot learning setup in \citep{malladi2023} and focus on classification tasks. The training set contains $k=512$ samples per class while the test set has $1000$ samples. We also employ prompts for finetuning; where the adopted prompts are the same as those in \citep[Table 13]{malladi2023}. AdamW is adopted as the base optimizer, and hyperparameters are tuned from those in Table \ref{tab.roberta-few-shot-hp}. Our experiments are averaged over $3$ random trials. The estimated runtime is about 5 minutes per dataset.

\begin{table}[ht]
    \centering
    \caption{Hyperparameters used for few-shot learning with RoBERTa-large.}
    \begin{tabular}{cc}
        \toprule
        Hyper-parameters & Values \\
        \midrule
        LoRA $r$ (rank) & 8 \\
        LoRA $\alpha$  & 16 \\
        \# iterations & 1000 \\
        batchsize & 16 \\
        learning rate &   1$\times 10^{-4}$, 3$ \times 10^{-4}$, 5$ \times 10^{-4}$ \\
        $\rho$ for SAM & 0.05, 0.1, 0.2 \\
        $\mu_0$ for BAR & 0.5, 1.0, 2.0 \\
        scheduler for BAR & linear, cosine \\
        \bottomrule
    \end{tabular}
    \label{tab.roberta-few-shot-hp}
\end{table}

The per-iteration runtime on the SST-5 dataset of BAR, SAM, and the baseline optimizer are compared in Table \ref{tab.time-mem}. It can be seen that SAM is much more slower than the baseline approach, and BAR reduces 74\% additional runtime of SAM, while achieving comparable accuracy. We believe that this runtime saving can be even larger with additional engineering efforts such as kernel fusion, which we leave for future work. This validates the computational efficiency of BAR. 

\begin{table}[ht]
    \centering
    \caption{Per-iteration runtime for finetuning RoBERTa-large on SST5. }
    \begin{tabular}{cccc}
        \toprule
        SST5         &  baseline & SAM & BAR        \\
        \midrule
        time (s)     &  0.105    & 0.265 & 0.146   \\
        \bottomrule
    \end{tabular}
    \label{tab.time-mem}
\end{table}

\textbf{Experiments on OPT.} 
For OPT-1.3B, we consider tasks from the SuperGLUE benchmark covering classification and multiple-choice. We also consider generation tasks on SQuAD. Following \citep{malladi2023}, we randomly sample $1000$ data for training and the other $1000$ for testing. AdamW is adopted as base optimizer.
The hyperparameters adopted are searched over values in Table \ref{tab.opt-few-shot-hp}. Estimated runtime is less than or around 10 minutes, depending on the dataset. 

If we directly apply FP16 training with SAM, \emph{underflow} can happen if one does not take care of the gradient scaling on the two gradients calculated per iteration. This means that SAM is not flexible enough to be integrated with the codebase for large scale training, as FP16 is the default choice for finetuning LMs. We employ FP32 to bypass the issue with SAM. Consequently, the training speed is significantly slowed down; see a summary in Table \ref{tab.time-mem-opt}. It further demonstrates the effectiveness of BAR for large scale-training.

Overall, the results for few-shot learning indicate that given limited data, BAR can effectively improve generalization using significantly reduced computational resources relative to SAM.

\begin{table}[ht]
    \centering
    \caption{Hyperparameters used for few-shot learning with OPT-1.3B.}
    \begin{tabular}{cc}
        \toprule
        Hyper-parameters & Values \\
        \midrule
        LoRA $r$ (rank) & 8 \\
        LoRA $\alpha$  & 16 \\
        \# iterations & 1000 \\
        batchsize & 2, 4, 8 \\
        learning rate &   1$ \times 10^{-5}$, 1$\times 10^{-4}$, 5$ \times 10^{-4}$ \\
        $\rho$ for SAM & 0.05, 0.1, 0.2 \\
        $\mu_0$ for BAR & 0.2, 0.5, 1.0, 2.0 \\
        scheduler for BAR & linear, cosine \\
        \bottomrule
    \end{tabular}
    \label{tab.opt-few-shot-hp}
\end{table}

\begin{table}[ht]
    \centering
    \caption{Per-iteration runtime for finetuning OPT-1.3B on RTE. }
    \begin{tabular}{cccc}
        \toprule
        RTE         &  baseline & SAM   & BAR        \\
        \midrule
        precision    & FP16      & FP32  & FP16    \\
        time (s)     &  0.1671    & 0.708 & 0.1731  \\
        \bottomrule
    \end{tabular}
    \label{tab.time-mem-opt}
\end{table}

\subsection{Finetuning with RoBERTa-large}\label{apdx.sec.roberta-full}

\begin{table*}[ht]
    \centering
    \caption{Experiments on finetuning RoBERTa (355M). Results marked with $\dagger$ are taken from \citep{hu2021lora}, and those with $*$ refer to Adapter$^\text{P}$ in \citep{hu2021lora}.}
   \vskip 0.1in
    \begin{tabular}{c|c|ccccccccc}
        \toprule
        RoBERTa     & \# para   & \small{SST2} & \small{STS-B} & \small{RTE}  & \small{QQP}  & \small{QNLI} & \small{MRPC} & \small{MNLI} & \small{CoLA} & avg \\
        \midrule
        FT$^\dagger$ & 355M    & 96.4  & 92.4  & 86.6 & 92.2 & 94.7 & 90.9 & 90.2 & 68.0 & 88.9 \\
        \midrule 
        Adapter$^*$  & 0.8M    & \textbf{96.6}  & 91.9  & 80.1 & \textbf{91.7} & \textbf{94.8} & 89.7 & - & \textbf{67.8} & - \\
        LoRA         & 0.8M    & 95.8  & 92.4  & 88.2 & 91.4 & 94.7 & 89.6 & \underline{90.6} & 64.8 & 88.4 \\
        \small{\textbf{LoRA-oBAR}} &0.8M & \underline{96.0}  & \textbf{92.6}  & \underline{88.7} & \underline{91.6} & \textbf{94.8} & \textbf{90.3} & \underline{90.6} & 65.1 & \underline{88.7}
        \\
        \small{\textbf{LoRA-nBAR}}     & 0.8M   &  \underline{96.0}    & \textbf{92.6}  & \textbf{89.2}  & \underline{91.6}  & 94.7  & \textbf{90.3}   &  \textbf{90.8}   &   \underline{65.6} & \textbf{88.9}  \\
        \bottomrule
    \end{tabular}
    \label{tab.lora-roberta-ft-full}
\end{table*}

Our implementation is inspired from \citep{hu2021lora}\footnote{\url{https://github.com/microsoft/LoRA/tree/main}}, which is under MIT License. 
The hyperparameters are chosen the same as provided in its GitHub Repo. AdamW is adopted as the base optimizer.
However, we employ single GPU rather than multiple ones and use gradient accumulation rather than parallelism due to memory constraint. We also note that there could be failure cases for LoRA using certain seed, e.g., SST-2 with seed 1 and MNLI with seed 2. These cases are ignored when comparing. We consider the GLUE benchmark and report the mismatched accuracy for MNLI, Matthew’s correlation for CoLA, Pearson correlation for STS-B, and accuracy for other datasets. Larger values indicate better results for all datasets. For LoRA, we employ $r=8$ and $\alpha=16$. Experiments are conducted over three random trials for all datasets, with the exception of QQP, for which only two trials are performed due to its large size. The results of final test performance can be found in Table \ref{tab.lora-roberta-ft-full}. Estimated runtime varies for different datasets from 2 to 15 hours, except for QQP which takes 3 days on our device.
 
For the hyperparameters of oBAR and nBAR, $\mu_0$ is typically chosen from $\{ 0.2, 0.5, 1.0 \}$; however, for QQP, a value of $0.05$ is used. The scheduler is chosen from linear and constant. We also observe that for datasets such as COLA and RTE, setting weight decay as $0$ works best for BAR.

\subsection{GPT2 medium on WebNLG Challenge}

AdamW is adopted as base optimizer.
The hyperparameters can be found in Table \ref{tab.gpt2-hp}. Our results are obtained from three random trials. Each trial takes roughly 8 hours on our hardware. 

\begin{table}[ht]
    \centering
    \caption{Hyperparameters used for GPT2.}
    \begin{tabular}{cc}
        \toprule
        Hyper-parameters & Values \\
        \midrule
        LoRA $r$ (rank) & 4 \\
        LoRA $\alpha$  & 32 \\
        \# epochs & 5 \\
        batchsize & 8 \\
        learning rate &   2$ \times 10^{-4}$ \\
        label Smooth & 0.1 \\
        $\mu_0$ for BAR & 0.1, 0.15, 0.2, 0.25, 0.3 \\
        scheduler for BAR & linear, constant \\
        \midrule
        beam size & 10 \\
        length penalty  & 0.8 \\
        \bottomrule
    \end{tabular}
    \label{tab.gpt2-hp}
\end{table}

\end{document}